\def\eqref#1{equation~\ref{#1}}
\def\1{\bm{1}}
\DeclareMathAlphabet{\mathsfit}{\encodingdefault}{\sfdefault}{m}{sl}
\SetMathAlphabet{\mathsfit}{bold}{\encodingdefault}{\sfdefault}{bx}{n}
\DeclareMathOperator*{\argmin}{arg\,min}
\def\BibTeX{{\rm B\kern-.05em{\sc i\kern-.025em b}\kern-.08em
    T\kern-.1667em\lower.7ex\hbox{E}\kern-.125emX}}
\setlist[enumerate]{nosep}
\newtheorem{lemma}{Lemma}
\newtheorem{theorem}{Theorem}
\newtheorem{definition}{Definition}
\DeclareRobustCommand\onedot{\futurelet\@let@token\@onedot}
\def\@onedot{\ifx\@let@token.\else.\null\fi\xspace}
\def\ie{\emph{i.e}\onedot}
\newcommand{\method}{\textsc{EnMDAP}\xspace}
\newcommand{\blue}[1]{{\color{black} #1}}
\title{Ensemble Multi-source Domain Adaptation \\ with Pseudolabels}
\author{Seongmin Lee, Hyunsik Jeon \& U Kang\thanks{Corresponding author}\\
Seoul National University\\
\texttt{\{ligi214,jeon185,ukang\}@snu.ac.kr}
}
\begin{document}

\maketitle

\begin{abstract}
\vspace{-0.5mm}
\textit{Given multiple source datasets with labels, how can we train a target model with no labeled data?}
Multi-source domain adaptation (MSDA) aims to train a model using multiple source datasets different from a target dataset in the absence of target data labels.
MSDA is a crucial problem applicable to many practical cases where labels for the target data are unavailable due to privacy issues.
Existing MSDA frameworks are limited since they align data without considering conditional distributions $p(\mathbf{x}|y)$ of each domain.
\blue{
They also miss a lot of target label information by not considering the target label at all and relying on only one feature extractor.
}
In this paper, we propose \emph{Ensemble Multi-source Domain Adaptation with Pseudolabels} (\method), a novel method for multi-source domain adaptation. 
\blue{
\method exploits label-wise moment matching to align conditional distributions $p(\mathbf{x}|y)$, using pseudolabels for the unavailable target labels, and introduces ensemble learning theme by using multiple feature extractors for accurate domain adaptation.} 
Extensive experiments show that \method provides the state-of-the-art performance for multi-source domain adaptation tasks in both of image domains and text domains.
\vspace{-0.5mm}

\end{abstract}

\vspace{-2mm}
\section{Introduction}
\label{sec:intro}
\vspace{-1mm}
\textit{Given multiple source datasets with labels, how can we train a target model with no labeled data?}
A large training data are essential for training deep neural networks.
Collecting abundant data is unfortunately an obstacle in practice; even if enough data are obtained, manually labeling those data is prohibitively expensive.
Using other available or much cheaper datasets would be a solution for these limitations;
however, indiscriminate usage of other datasets often brings severe generalization error due to the presence of dataset shifts (\cite{TorralbaE11}).
Unsupervised domain adaptation (UDA) tackles these problems where no labeled data from the target domain are available, but labeled data from other source domains are provided.
Finding out domain-invariant features has been the focus of UDA since it allows knowledge transfer from the labeled source dataset to the unlabeled target dataset.
There have been many efforts to transfer knowledge from a single source domain to a target one.
Most recent frameworks minimize the distance between two domains by deep neural networks and distance-based techniques such as discrepancy regularizers (\cite{LongCWJ15,LongZWJ16,LongZWJ17}), adversarial networks (\cite{GaninUAGLLML16, TzengHSD17}), and generative networks (\cite{LiuBK17, ZhuPIE17, HoffmanTPZISED18}).

While the above-mentioned approaches consider one single source, we address multi-source domain adaptation (MSDA), which is very crucial and more practical in real-world applications as well as more challenging.
MSDA is able to bring significant performance enhancement by virtue of accessibility to multiple datasets as long as multiple domain shift problems are resolved.
Previous works have extensively presented both theoretical analysis (\cite{Ben-DavidBCKPV10, MansourMR08, CrammerKW08, HoffmanMZ18, ZhaoZWMCG18, ZellingerMS20}) and models (\cite{ZhaoZWMCG18,XuCZYL18,PengBXHSW19}) for MSDA.
MDAN (\cite{ZhaoZWMCG18}) and DCTN (\cite{XuCZYL18}) build adversarial networks for each source domain to generate features domain-invariant enough to confound domain classifiers.
However, these approaches do not encompass the shifts among source domains, counting only shifts between source and target domain.
M\textsuperscript{3}SDA (\cite{PengBXHSW19}) adopts moment matching strategy but makes the unrealistic assumption that matching the marginal probability $p(\mathbf{x})$ would guarantee the alignment of the conditional probability $p(\mathbf{x}|y)$.
Most of these methods also do not fully exploit the knowledge of target domain, imputing to the inaccessibility to the labels.
Furthermore, all these methods leverage one single feature extractor, which possibly misses important information regarding label classification.

In this paper,
we propose \method, a novel MSDA framework which mitigates the limitations of these methods of not explicitly considering conditional probability $p(\mathbf{x}|y)$, and relying on only one feature extractor.
The model architecture is illustrated in Figure \ref{fig:fig1}.
\method aligns the conditional probability $p(\mathbf{x}|y)$ by utilizing label-wise moment matching.
\blue{We employ pseudolabels for the inaccessible target labels to maximize the usage of the target data.}
Moreover, integrating the features from multiple feature extractors gives abundant information about labels to the extracted features.
Extensive experiments show the superiority of our proposed methods.

Our contributions are summarized as follows:
\vspace{-2mm}
\begin{itemize*}
	\item \textbf{Method.} We propose \method, a novel approach for MSDA that effectively obtains domain-invariant features from multiple domains by matching conditional probability $p(\mathbf{x}|y)$, not marginal one, \blue{utilizing pseudolabels for inaccessible target labels to fully deploy target data,} and using multiple feature extractors. It allows domain-invariant features to be extracted, capturing the intrinsic differences of different labels.
	\item \textbf{Analysis.} We theoretically prove that minimizing the label-wise moment matching loss is relevant to bounding the target error.
	\item \textbf{Experiments.} We conduct extensive experiments on image and text datasets.
We show that 1) \method provides the state-of-the-art accuracy, and
2) each of our main ideas significantly contributes to the superior performance.
\end{itemize*}

\vspace{-1mm}

\vspace{-1mm}
\section{Related Work}
\label{sec:related}
\vspace{-2mm}
\textbf{Single-source Domain Adaptation.}
Given a labeled source dataset and an unlabeled target dataset,
single-source domain adaptation aims to train a model that performs well on the target domain.
The challenge of single-source domain adaptation is to reduce the discrepancy between the two domains and to obtain appropriate domain-invariant features.
Various discrepancy measures such as Maximum Mean Discrepancy (MMD) (\cite{TzengHZSD14, LongCWJ15, LongZWJ16, LongZWJ17, GhifaryKZBL16}) and KL divergence (\cite{ZhuangCLPH15}) have been used as regularizers.
Inspired from the insight that the domain-invariant features should exclude the clues about its domain, constructing adversarial networks against domain classifiers has shown superior performance.
\cite{LiuBK17} and \cite{HoffmanTPZISED18} deploy GAN to transform data across the source and target domain, while \cite{GaninUAGLLML16} and \cite{TzengHSD17} leverage the adversarial networks to extract common features of the two domains. 
Unlike these works, we focus on \emph{multiple} source domains.

\vspace{-0.5mm}
\textbf{Multi-source Domain Adaptation.}
Single-source domain adaptation should not be naively employed for multiple source domains due to the shifts between source domains.
Many previous works have tackled MSDA problems theoretically.
\cite{MansourMR08} establish distribution weighted combining rule that the weighted combination of source hypotheses is a good approximation for the target hypothesis.
The rule is further extended to a stochastic case with joint distribution over the input and the output space in \cite{HoffmanMZ18}.
\cite{CrammerKW08} propose the general theory of how to sift appropriate samples out of multi-source data using expected loss.
Efforts to find out transferable knowledge from multiple sources from the causal viewpoint are made in \cite{ZhangGS15}.
There have been salient studies on the learning bounds for MSDA.
\cite{Ben-DavidBCKPV10} found the generalization bounds based on $\mathcal{H}\Delta\mathcal{H}$-divergence, which are further tightened by \cite{ZhaoZWMCG18}.
Frameworks for MSDA have been presented as well.
\cite{ZhaoZWMCG18} propose learning algorithms based on the generalization bounds for MSDA.
DCTN (\cite{XuCZYL18}) resolves domain and category shifts between source and target domains via adversarial networks.
M\textsuperscript{3}SDA \cite{PengBXHSW19} associates all the domains into a common distribution by aligning the moments of the feature distributions of multiple domains.
However, all these methods do not consider multimode structures (\cite{PeiCLW18}) that differently labeled data follow distinct distributions, even if they are drawn from the same domain.
Also, the domain-invariant features in these methods contain the label information for only one label classifier which lead these methods to miss a large amount of label information.
Different from these methods, our frameworks fully count the multimodal structures handling the data distributions in a label-wise manner and minimize the label information loss considering multiple label classifiers.

\textbf{Moment Matching.}
Domain adaptation has deployed the moment matching strategy to minimize the discrepancy between source and target domains.
MMD regularizer (\cite{TzengHZSD14, LongCWJ15, LongZWJ16, LongZWJ17, GhifaryKZBL16}) can be interpreted as the first-order moment while \cite{SunFS16} address second-order moments of source and target distributions.
\cite{ZellingerGLNS17} investigate the effect of higher-order moment matching.
M\textsuperscript{3}SDA (\cite{PengBXHSW19}) demonstrates that moment matching yields remarkable performance also with multiple sources.
While previous works have focused on matching the moments of marginal distributions for single-source adaptation, we handle conditional distributions in multi-source scenarios.

\section{Proposed Method}
\label{sec:proposed}
\begin{figure}[]
	\centering
	\includegraphics[width=0.9 \linewidth]{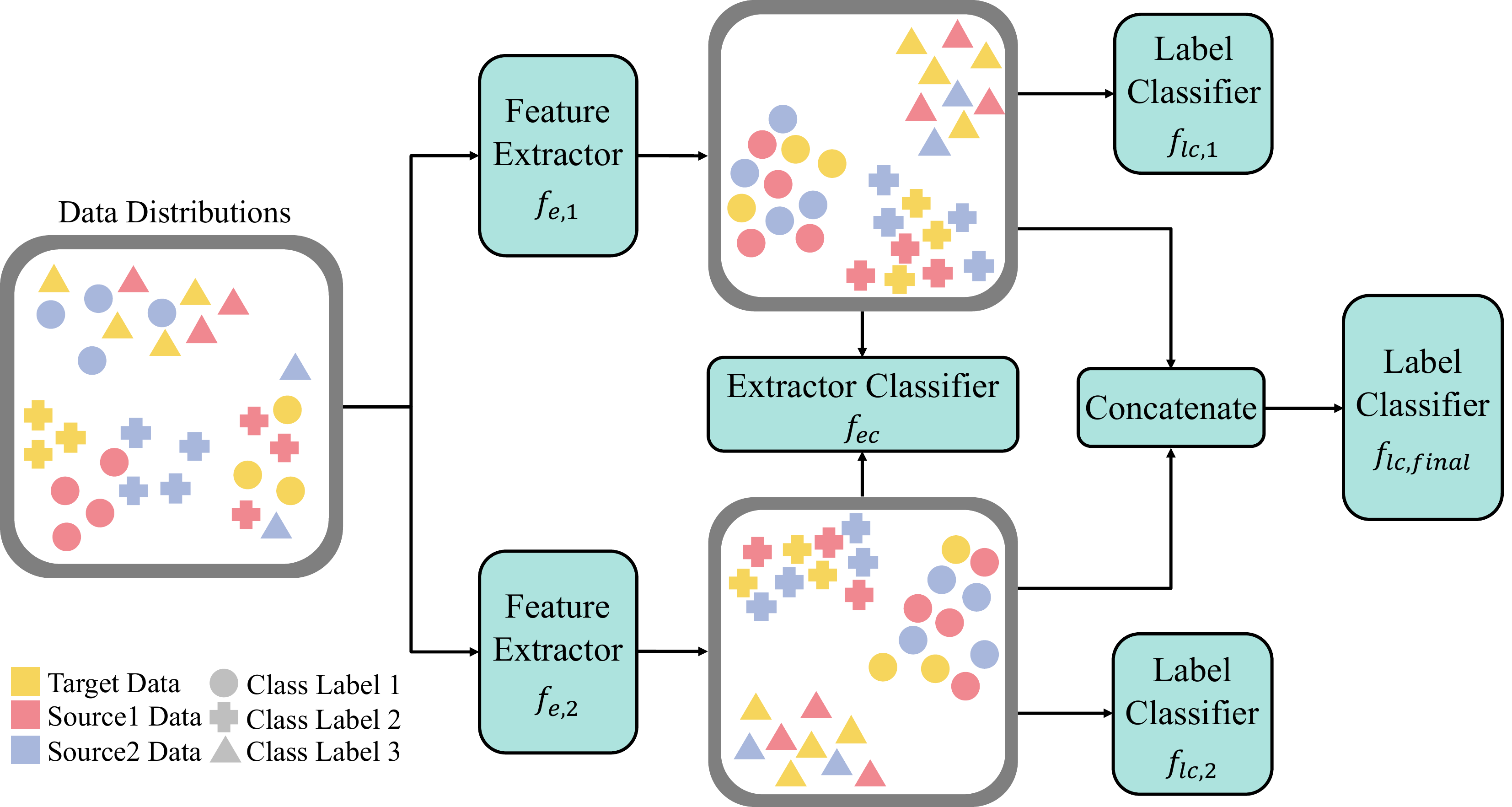}
	\caption{
	\method for $n$=2.
	\method consists of $n$ pairs of feature extractor and label classifier, one extractor classifier, and one final label classifier.
	Colors and symbols of the markers indicate domains and class labels of the data, respectively.
	}
	\label{fig:fig1}
\end{figure}

\vspace{-1mm}
In this section, we describe our proposed method, \method.
We first formulate the problem definition in Section \ref{310problemdefinition}.
Then, we describe our main ideas in Section \ref{320overview}.
Section \ref{330labelwisemm} elaborates how to match label-wise moment with pseudolabels and
Section \ref{340ensemble} extends the approach by adding the concept of ensemble learning.
Figure \ref{fig:fig1} shows the overview of \method.

\subsection{Problem Definition}
\label{310problemdefinition}
\vspace{-1mm}
Given a set of labeled datasets from $N$ source domains $\mathcal{S}_1, \dots, \mathcal{S}_N$ and an unlabeled dataset from a target domain $\mathcal{T}$, we aim to construct a model that minimizes test error on $\mathcal{T}$.
We formulate source domain $\mathcal{S}_i$ as a tuple of the data distribution $\mu_{\mathcal{S}_i}$ on data space $\mathcal{X}$ and the labeling function $l_{\mathcal{S}_i}$: $\mathcal{S}_i=(\mu_{\mathcal{S}_i},l_{\mathcal{S}_i})$.
Source dataset drawn with the distribution $\mu_{\mathcal{S}_i}$ is denoted as $\mathbf{X}_{\mathcal{S}_i}=\{(\mathbf{x}_{j}^{\mathcal{S}_i}, y_{j}^{\mathcal{S}_i})\}_{j=1}^{n_{\mathcal{S}_i}}$.
Likewise, the target domain and the target dataset are denoted as $\mathcal{T}=(\mu_{\mathcal{T}}, l_{\mathcal{T}})$ and $\mathbf{X}_\mathcal{T} = \{\mathbf{x}_{j}^{\mathcal{T}}\}_{j=1}^{n_\mathcal{T}}$, respectively.
We narrow our focus down to homogeneous settings in classification tasks: all domains share the same data space $\mathcal{X}$ and label set $\mathcal{C}$.

\subsection{Overview}
\label{320overview}
\vspace{-1mm}
We propose \method based on the following observations:
1) existing methods focus on aligning the marginal distributions $p(\mathbf{x})$ not the conditional ones $p(\mathbf{x}|y)$,
2) knowledge of the target data is not fully employed as no target label is given,
and
3) there exists a large amount of label information loss since domain-invariant features are extracted for only one single label classifier.
Thus, we design \method aiming to solve the limitations.
Designing such method entails the following challenges:

\begin{enumerate}
	\item \textbf{Matching conditional distributions.} How can we align the conditional distribution, $p(\mathbf{x}|y)$, of multiple domains not the marginal one, $p(\mathbf{x})$?
	\item \textbf{Exploitation of the target data.} How can we fully exploit the knowledge of the target data despite the absence of the target labels?
	\item \textbf{Maximally utilizing feature information.} How can we maximally utilize the information that the domain-invariant features contain?
\end{enumerate}
\vspace{-0.5mm}
We propose the following main ideas to address the challenges:
\vspace{-0.5mm}
\begin{enumerate}
	\item \textbf{Label-wise moment matching (Section~\ref{330labelwisemm}).} We match the \emph{label-wise} moments of the domain-invariant features so that the features with the same labels have similar distributions regardless of their original domains.
	\item \textbf{Pseudolabels (Section~\ref{330labelwisemm}).} We use pseudolabels as alternatives to the target labels.
	\item \textbf{Ensemble of feature representations (Section~\ref{340ensemble}).}
We learn to extract ensemble of features from multiple feature extractors, each of which involves distinct domain-invariant features for its own label classifier.
\end{enumerate}
\vspace{-2mm}

\subsection{Label-wise Moment Matching with pseudolabels}
\label{330labelwisemm}
\vspace{-2mm}
We describe how \method matches conditional distributions $p(\mathbf{x}\vert y)$ of the features from multiple distinct domains.
In \method, a feature extractor $f_e$ and a label classifier $f_{lc}$ lead the features to be domain-invariant and label-informative at the same time.
The feature extractor $f_e$ extracts features from data, and
the label classifier $f_{lc}$ receives the features and predicts the labels for the data.
We train the two components, $f_e$ and $f_{lc}$, according to the losses for \emph{label-wise moment matching} and \emph{label classification}, which make the features domain-invariant and label-informative, respectively.

\textbf{Label-wise Moment Matching.}
To achieve the alignment of domain-invariant features, we define a label-wise moment matching loss as follows:
\begin{equation}
\mathcal{L}_{lmm,K} =
{1 \over \left|\mathcal{C}\right|}
{N+1 \choose 2}^{-1}
{\sum\limits_{k=1}^{K}
{\sum\limits_{\mathcal{D},\mathcal{D}'}
{\sum\limits_{c \in \mathcal{C}}
{
{\left\Vert
{
{1 \over n_{\mathcal{D},c}}
\sum\limits_{j;y_{j}^{\mathcal{D}}=c}{{f_e}(\mathbf{x}_{j}^{\mathcal{D}})^k} -
{1 \over n_{\mathcal{D}',c}}
\sum\limits_{j;y_j^{\mathcal{D}'}=c}{{f_e}(\mathbf{x}_{j}^{\mathcal{D}'})^k}
}
\right\Vert}_2}}
}},
\end{equation}
where $K$ is a hyperparameter indicating the maximum order of moments considered by the loss,
$\mathcal{D}$ and $\mathcal{D}'$ are two distinct domains amongst the $N+1$ domains,
and $n_{\mathcal{D},c}$ is the number of data labeled as $c$ in $\mathbf{X}_{\mathcal{D}}$.
We introduce \emph{pseudolabels} for the target data, which are determined by the outputs of the model currently being trained, to manage the absence of the ground truths for the target data.
In other words, we leverage ${f_{lc}}(f_e(\mathbf{x}^\mathcal{T}))$ to give the pseudolabel to the target data $\mathbf{x}^\mathcal{T}$.
Drawing the pseudolabels using the incomplete model, however, brings mis-labeling issue which impedes further training.
To alleviate this problem, we set a threshold $\tau$ and assign the pseudolabels to the target data only when the prediction confidence is greater than the threshold.
The target examples with low confidence are not pseudolabeled and not counted in label-wise moment matching.

By minimizing $\mathcal{L}_{lmm,K}$, the feature extractor $f_e$ aligns data from multiple domains by bringing consistency in distributions of the features with the same labels.
The data with distinct labels are aligned independently, taking account of the multimode structures that differently labeled data follow different distributions.

\textbf{Label Classification.}
The label classifier $f_{lc}$ gets the features projected by $f_e$ as inputs and makes the label predictions.
The \emph{label classification loss} is defined as follows:
\begin{equation}\label{eq:label}
	\mathcal{L}_{lc}=
	{1 \over N}\sum\limits_{i=1}^{N}{
	{1 \over n_{\mathcal{S}_i}}
	\sum\limits_{j=1}^{n_{\mathcal{S}_i}}
	{\mathcal{L}_{ce}({f_{lc}}(f_e(\mathbf{x}_{j}^{\mathcal{S}_i})),
	y_{j}^{\mathcal{S}_i})}},
\end{equation}
where $\mathcal{L}_{ce}$ is the softmax cross-entropy loss.
Minimizing $\mathcal{L}_{lc}$ separates the features with different labels so that each of them gets label-distinguishable.

\subsection{Ensemble of Feature Representations}
\label{340ensemble}
\vspace{-1mm}
In this section, we introduce ensemble learning for further enhancement.
Features extracted with the strategies elaborated in the previous section contain the label information for a single label classifier.
However, each label classifier leverages only limited label characteristics,
and thus
the conventional scheme to adopt only one pair of feature extractor and label classifier captures only a small part of the label information.
Our idea is to leverage an ensemble of multiple pairs of feature extractor and label classifier in order to make the features to be more label-informative.

We train multiple pairs of feature extractor and label classifier in parallel following the label-wise moment matching approach explained in Section \ref{330labelwisemm}. 
Let $n$ denote the number of the feature extractors in the overall model.
We denote the $n$ (feature extractor, label classifier) pairs as $(f_{e,1},f_{lc,1}),(f_{e,2},f_{lc,2}),\dots,(f_{e,n},f_{lc,n})$ and the $n$ resultant features as $feat_1,feat_2,\dots,feat_n$ where $feat_i$ is the output of the feature extractor ${f_{e,i}}$.
After obtaining $n$ different feature mapping modules, we concatenate the $n$ features into one vector $feat_{final}=concat(feat_1,feat_2,\dots,feat_n)$.
The final label classifier $f_{lc,final}$ takes the concatenated feature as input, and predicts the label of the feature.

Naively exploiting multiple feature extractors, however, does not guarantee the diversity of the features since it resorts to the randomness.
Thus, we introduce a new model component, \emph{extractor classifier}, which separates the features from different extractors.
The extractor classifier $f_{ec}$ gets the features generated by a feature extractor as inputs and predicts which feature extractor has generated the features.
For example, if $n=2$, the extractor classifier $f_{ec}$ attempts to predict whether the input feature is extracted by the extractor $f_{e,1}$ or $f_{e,2}$.
By training the extractor classifier and multiple feature extractors at once, we explicitly diversify the features obtained from different extractors.
We train the extractor classifier utilizing the \emph{feature diversifying loss}, $\mathcal{L}_{fd}$:
\begin{equation}
	\mathcal{L}_{fd}={\dfrac 1 {N+1}}\left(\sum_{i=1}^{N}{{\frac 1 {n_{\mathcal{S}_i}}}\sum_{j=1}^{n_{\mathcal{S}_i}}{\sum_{k=1}^{n}{\mathcal{L}_{ce}(f_{e,k}(\mathbf{x}_j^{\mathcal{S}_i}), k)}}}+
	{\frac 1 {n_\mathcal{T}}}\sum_{j=1}^{n_{\mathcal{T}}}{\sum_{k=1}^{n}{\mathcal{L}_{ce}(f_{e,k}(\mathbf{x}_j^\mathcal{T}), k)}}\right),
\end{equation}
where $n$ is the number of feature extractors.
\vspace{-1mm}

\subsection{\method: Ensemble Multi-Source Domain Adaptation with pseudolabels}
\label{350nmuldap} 
\vspace{-1mm}
Our final model \method consists of $n$ pairs of feature extractor and label classifier, $(f_{e,1},f_{lc,1}),(f_{e,2},f_{lc,2}),\dots,(f_{e,n},f_{lc,n})$, one extractor classifier $f_{ec}$, and one final label classifier $f_{lc,final}$.
We first train the entire model except the final label classifier with the loss $\mathcal{L}$:
\vspace{-0.5mm}
\begin{equation}
	\mathcal{L}=\sum_{k=1}^{n}{\mathcal{L}_{lc,k}}+\alpha\sum_{k=1}^{n}{\mathcal{L}_{lmm,K,k}}+\beta\mathcal{L}_{fd},
\end{equation}
where
$\mathcal{L}_{lc,k}$ is the label classification loss of the classifier $f_{lc,k}$,
$\mathcal{L}_{lmm,K,k}$ is the label-wise moment matching loss of the feature extractor $f_{e,k}$,
and $\alpha$ and $\beta$ are the hyperparameters.
Then, the final label classifier is trained with respect to the label classification loss $\mathcal{L}_{lc,final}$ using the concatenated features from multiple feature extractors.

\section{Analysis}
\label{sec:analysis}
\vspace{-2mm}
We present a theoretical insight regarding the validity of the label-wise moment matching loss.
For simplicity, we tackle only binary classification tasks.
The error rate of a hypothesis $h$ on a domain $\mathcal{D}$ is denoted as $\epsilon_{\mathcal{D}}(h)=Pr\{h(\mathbf{x}) \neq l_{\mathcal{D}}(\mathbf{x})\}$ where $l_\mathcal{D}$ is the labeling function on the domain $\mathcal{D}$.
We first introduce $k$-th order label-wise moment divergence.

\begin{definition}
Let
$\mathcal{D}$ and $\mathcal{D}'$ be two domains over an input space $\mathcal{X}\subset\mathbb{R}^n$ where $n$ is the dimension of the inputs.
Let $\mathcal{C}$ be the set of the labels,
and $\mu_c(\mathbf{x})$ and $\mu'_c(\mathbf{x})$ be the data distribution given that the label is $c$,
i.e. $\mu_c(\mathbf{x})=\mu(\mathbf{x}|y=c)$ and $\mu'_c(\mathbf{x})=\mu'(\mathbf{x}|y=c)$ for the data distribution $\mu$ and $\mu'$ on the domains $\mathcal{D}$ and $\mathcal{D}'$, respectively.
Then, the $k$-th order label-wise moment divergence $d_{LM,k}{(\mathcal{D},\mathcal{D}')}$ of the two domains $\mathcal{D}$ and $\mathcal{D}'$ over $\mathcal{X}$ is defined as
\begin{equation}
\centering
	d_{LM,k}{(\mathcal{D},\mathcal{D}')}=
	\sum\limits_{c \in \mathcal{C}}{
	\sum\limits_{\mathbf{i}\in\Delta_{k}}{
	\left|
	p(c)\int_{\mathcal{X}}{\mu_{c}(\mathbf{x})\prod\limits_{j=1}^{n}{(x_j)^{i_j}}d\mathbf{x}} -
	p'(c)\int_{\mathcal{X}}{\mu'_{c}(\mathbf{x})\prod\limits_{j=1}^{n}{(x_j)^{i_j}}d\mathbf{x}}
	\right|
	}
	},
\end{equation}
where
$\Delta_{k}=\{\mathbf{i}=(i_1,\dots,i_n)\in\mathbb{N}_{0}^{n} | \sum_{j=1}^{n}{i_j}=k\}$ is the set of the tuples of the nonnegative integers, which add up to $k$,
$p(c)$ and $p'(c)$ are the probability that arbitrary data from $\mathcal{D}$ and $\mathcal{D}'$ to be labeled as $c$ respectively, and
the data $\mathbf{x}\in\mathcal{X}$ is expressed as $(x_1,\dots,x_n)$.\qed
\end{definition}

The ultimate goal of MSDA is to find a hypothesis $h$ with the minimum target error.
We nevertheless train the model with respect to the source data since ground truths for the target are unavailable.
Let $N$ datasets be drawn from $N$ labeled source domains $\mathcal{S}_1,\dots,\mathcal{S}_N$ respectively.
We denote $i$-th source dataset $\mathbf{X}_{\mathcal{S}_i}$ as $\{(\mathbf{x}_j^{\mathcal{S}_i},y_j^{\mathcal{S}_i})\}_{j=1}^{n_{\mathcal{S}_i}}$.
The empirical error of hypothesis $h$ in $i$-th source domain $\mathcal{S}_i$ estimated with $\mathbf{X}_{\mathcal{S}_i}$ is formulated as
$\hat{\epsilon}_{\mathcal{S}_i}(h)={1 \over n_{\mathcal{S}_i}}\sum_{j=1}^{n_{\mathcal{S}_i}}{\mathbf{1}_{h(\mathbf{x}_j^{\mathcal{S}_i})\neq y_j^{\mathcal{S}_i}}}$.
Given a weight vector $\bm{\alpha}=(\alpha_1,\alpha_2,\dots,\alpha_N)$ such that $\sum_{i=1}^{N}{\alpha_i}=1$, the weighted empirical source error is formulated as
$\hat{\epsilon}_{\bm{\alpha}}(h)=\sum_{i=1}^{N}{\alpha_{i}\hat{\epsilon}_{\mathcal{S}_i}{(h)}}$.
We extend the theorems in \cite{Ben-DavidBCKPV10, PengBXHSW19} and derive a bound for the target error $\epsilon_{\mathcal{T}}(h)$, for $h$ trained with source data, in terms of $k$-th order label-wise moment divergence.

\begin{theorem}
\label{theorem1}
Let $\mathcal{H}$ be a hypothesis space of VC dimension $d$,
$n_{\mathcal{S}_i}$ be the number of samples from source domain $\mathcal{S}_i$,
$m=\sum_{i=1}^{N}{n_{\mathcal{S}_i}}$ be the total number of samples from $N$ source domains $\mathcal{S}_1,\dots,\mathcal{S}_N$,
and ${\bm\beta}=(\beta_1,\dots,\beta_N)$ with $\beta_{i}={n_{\mathcal{S}_i} \over m}$.
Let us define a hypothesis $\hat{h}={\arg\min}_{h\in\mathcal{H}}{\hat{\epsilon}_{\bm{\alpha}}(h)}$
that minimizes the weighted empirical source error,
and a hypothesis $h_{\mathcal{T}}^{\ast}={\arg\min}_{h\in\mathcal{H}}{\epsilon_{\mathcal{T}}(h)}$
that minimizes the true target error.
Then, for any $\delta\in(0,1)$ and $\epsilon>0$, there exist $N$ integers $n_{\epsilon}^{1},\dots,n_{\epsilon}^{N}$ and $N$ constants $a_{{n_\epsilon^1}},\dots,a_{{n_\epsilon^N}}$ such that
\begin{equation}
	\epsilon_{\mathcal{T}}(\hat{h}) \leq
	\epsilon_{\mathcal{T}}(h_{\mathcal{T}}^{\ast})+
	\eta_{\bm{\alpha},\bm{\beta},m,\delta}+
	\epsilon+
	\sum\limits_{i=1}^{N}{
	\alpha_{i}\left( 2\lambda_i+
	a_{n_\epsilon^i}\sum\limits_{k=1}^{n_\epsilon^i}{d_{LM,k}{(\mathcal{S}_i,\mathcal{T})}}
	\right)}
\end{equation}
with probability at least $1-\delta$, where
$
\eta_{\bm{\alpha},\bm{\beta},m,\delta}=
4\sqrt{\left(
\sum_{i=1}^{N}{\alpha_i^2 \over \beta_i}
\right)
\left(
2d\left(\log\left(2m\over d\right)+1\right)+2\log\left(4\over\delta\right)\over{m}
\right)
}
$
and
$\lambda_i=\min_{h\in\mathcal{H}}\{\epsilon_\mathcal{T}(h)+\epsilon_{\mathcal{S}_i}(h)\}$.\qed
\end{theorem}
\vspace{-2.5mm}
\begin{proof}
See the appendix.
\end{proof}
\vspace{-2.5mm}

Speculating that all datasets are balanced against the annotations, \ie, $p(c)=p'(c)={1 \over |\mathcal{C}|}$ for any $c\in\mathcal{C}$, $\mathcal{L}_{lmm,K}$ is expressed as the sum of the estimates of $d_{LM,k}$ with $k=1,\dots,K$.
The theorem provides an insight that label-wise moment matching allows the model trained with source data to have performance comparable to the optimal one on the target domain.

\section{Experiments}
\label{sec:experiment}
\vspace{-2mm}
We conduct experiments to answer the following questions of \method.
\vspace{-1mm}
\begin{itemize*}
	\item[\textbf{Q1}] \textbf{Accuracy (Section \ref{520evaluation}).} How well does \method perform in classification tasks?
	\item[\textbf{Q2}] \textbf{Ablation Study (Section \ref{530ablation}).} How much does each component of \method contribute to performance improvement?
	\item[\textbf{Q3}] \textbf{Effects of Degree of Ensemble (Section \ref{540ensemble}).} How does the performance change as the number $n$ of the pairs of the feature extractor and the label classifier increases?
\end{itemize*}
\vspace{-2.5mm}

\subsection{Experimental Settings}
\label{510expsettings}
\vspace{-1mm}

\textbf{Datasets.}
We use three kinds of datasets, Digits-Five, Office-Caltech10\footnote{\url{https://people.eecs.berkeley.edu/~jhoffman/domainadapt/}}, and Amazon Reviews\footnote{\url{https://github.com/KeiraZhao/MDAN/blob/master/amazon.npz}}.
Digits-Five consists of five datasets for digit recognition: MNIST\footnote{\url{http://yann.lecun.com/exdb/mnist/}} (\cite{LeCunBBH98}), MNIST-M\footnote{\url{http://yaroslav.ganin.net}} (\cite{GaninL15}), SVHN\footnote{\url{http://ufldl.stanford.edu/housenumbers/}} (\cite{NetzerWCBWN11}), SynthDigits\footnote{\url{http://yaroslav.ganin.net}} (\cite{GaninL15}), and USPS\footnote{\url{https://www.kaggle.com/bistaumanga/usps-dataset}} (\cite{HastieFT01}).
We set one of them as a target domain and the rest as source domains.
Following the conventions in prior works (\cite{XuCZYL18,PengBXHSW19}), we randomly sample 25000 instances from the source training set and 9000 instances from the target training set to train the model except for USPS for which the whole training set is used.
Office-Caltech10 is the dataset for image classification with 10 categories that Office31 dataset and Caltech dataset have in common.
It involves four different domains: Amazon, Caltech, DSLR, and Webcam.
We double the number of data by data augmentation and exploit all the original data and augmented data as training data and test data respectively.
%
Amazon Reviews dataset contains customers' reviews on 4 product categories: Books, DVDs, Electronics, and Kitchen appliances.
The instances are encoded into 5000-dimensional vectors and are labeled as being either positive or negative depending on their sentiments.
We set each of the four categories as a target and the rest as sources.
For all the domains, 2000 instances are sampled for training, and the rest of the data are used for the test.
Details about the datasets are summarized in appendix.

\textbf{Competitors.}
We use 3 MSDA algorithms, DCTN (\cite{XuCZYL18}), M\textsuperscript{3}SDA (\cite{PengBXHSW19}), and M\textsuperscript{3}SDA-$\beta$ (\cite{PengBXHSW19}), with state-of-the-art performances as baselines.
All the frameworks share the same architecture for the feature extractor, the domain classifier, and the label classifier for consistency.
For Digits-Five, we use convolutional neural networks based on LeNet5 (\cite{LeCunBBH98}).
For Office-Caltech10, ResNet50 (\cite{HeZRS16}) pretrained on ImageNet is used as the backbone architecture. 
For Amazon Reviews, the feature extractor is composed of three fully-connected layers each with 1000, 500, and 100 output units, and a single fully-connected layer with 100 input units and 2 output units is adopted for both of the extractor and label classifiers.
With Digits-Five,
LeNet5 (\cite{LeCunBBH98}) and ResNet14 (\cite{HeZRS16}) without any adaptation are additionally investigated \blue{in two different manners: \emph{Source Combined} and \emph{Single Best}}.
In \emph{Source Combined}, multiple source datasets are simply combined and fed into a model.
In \emph{Single Best}, we train the model with each source dataset independently, and report the result of the best performing one.
Likewise, ResNet50 and MLP consisting of 4 fully-connected layers with 1000, 500, 100, and 2 units are investigated without adaptation for Office-Caltech10 and Amazon Reviews respectively.

\textbf{Training Details.}
We train our models for Digits-Five with Adam optimizer (\cite{KingmaB14}) with $\beta_1=0.9$, $\beta_2=0.999$, and the learning rate of $0.0004$ for 100 epochs.
All images are scaled to $32\times32$ and the mini batch size is set to $128$.
We set the hyperparameters $\alpha=0.0005$, $\beta=1$, and $K=2$.
For the experiments with Office-Caltech10, all the modules comprising our model are trained following SGD with the learning rate $0.001$, except that the optimizers for feature extractors have the learning rate $0.0001$.
We scale all the images to $224\times224$ and set the mini batch size to $48$.
All the hyperparameters are kept the same as in the experiments with Digits-Five.
For Amazon Reviews, we train the models for $50$ epochs using Adam optimizer with $\beta_1=0.9$, $\beta_2=0.999$, and the learning rate of $0.0001$.
We set $\alpha=\beta=1$, $K=2$, and the mini batch size to $100$.
For every experiment, the confidence threshoold $\tau$ is set to $0.9$.

\begin{table}[t]
\caption{Classification accuracy on Digits-Five, Office-Caltech10, and Amazon Reviews with and without domain adaptation.
The letters before and after the slash represent source domains and a target domain respectively.
In Digits-Five, T, M, S, D, and U stands for MNIST, MNIST-M, SVHN, SynthDigits, and USPS respectively.
In Office-Caltech10 and Amazon Reviews,
we indicate each domain using the first letter of its name.
\blue{SC and SB indicate \emph{Source Combined} and \emph{Single Best} respectively.}
Note that \method shows the best performance.
}
\centering
\label{tab:accuracy}
\resizebox{0.9 \textwidth}{!}{
\subtable[Digits-Five]{
\begin{tabular}{@{}lcccccc@{}}
\toprule
\textbf{Method}
& \textbf{M+S+D+U/T}	& \textbf{T+S+D+U/M}		& \textbf{T+M+D+U/S}
& \textbf{T+M+S+U/D}	& \textbf{T+M+S+D/U}		& \textbf{Average}			\\ \midrule
LeNet5 (SC)
& 97.58$\pm$0.18		& 61.72$\pm$1.38			& 75.15$\pm$0.76
& 80.29$\pm$0.66    	& 81.58$\pm$1.51			& 79.27$\pm$0.90		\\
ResNet14 (SC)
& 98.22$\pm$0.26		& 63.53$\pm$0.84			& 79.08$\pm$1.63
& 92.85$\pm$0.48    	& 94.51$\pm$0.31			& 85.64$\pm$0.70		\\ \midrule
LeNet5 (SB)
& 97.09$\pm$0.14		& 51.10$\pm$1.87    		& 76.75$\pm$0.57
& 79.92$\pm$0.50    	& 83.28$\pm$0.92			& 77.63$\pm$0.80		\\
ResNet14 (SB)
& 97.07$\pm$1.03		& 49.48$\pm$1.30    		& 81.40$\pm$0.70
& 91.79$\pm$0.53    	& 91.54$\pm$2.68			& 82.33$\pm$1.25		\\ \midrule
DCTN
& 99.28$\pm$0.06		& 71.99$\pm$1.58			& 78.34$\pm$1.10	
& 91.55$\pm$0.65		& 98.43$\pm$0.23			& 87.92$\pm$0.72	\\
M\textsuperscript{3}SDA
& 98.75$\pm$0.05		& 67.77$\pm$0.71			& 81.75$\pm$0.61
& 88.51$\pm$0.29		& 97.17$\pm$0.22			& 86.79$\pm$0.38	 \\
M\textsuperscript{3}SDA-$\beta$
& 98.99$\pm$0.03		& 72.47$\pm$0.19			& 81.40$\pm$0.28
& 89.51$\pm$0.37		& 97.40$\pm$0.19			& 87.95$\pm$0.21	 \\
\method (n=2)
& \textbf{99.31$\pm$0.04}	& \textbf{83.95$\pm$0.90}	& \textbf{86.93$\pm$0.39}
& \textbf{93.15$\pm$0.17}	& \textbf{98.49$\pm$0.08}	& \textbf{92.37$\pm$0.31} \\
\bottomrule
\end{tabular}
}}
\resizebox{0.7 \textwidth}{!}{
\subtable[Office-Caltech10]{
\begin{tabular}{@{}lccccc@{}}
\toprule
\textbf{Method}
& \textbf{C+D+W/A}		& \textbf{A+D+W/C}			& \textbf{A+C+W/D}
& \textbf{A+C+D/W}		& \textbf{Average}			\\ \midrule
ResNet50 (SC)
& 95.47$\pm$0.25		& 91.59$\pm$0.51			& 99.36$\pm$0.78	
& 99.26$\pm$0.37		& 96.42$\pm$0.48			\\		
\midrule
ResNet50 (SB)
& 95.03$\pm$0.48		& 89.05$\pm$0.88			& 99.87$\pm$0.28	
& 98.24$\pm$0.61		& 95.55$\pm$0.56			\\
\midrule
DCTN
& 95.05$\pm$0.24		& 90.60$\pm$0.71			& \textbf{100.0$\pm$0.00	}
& 99.46$\pm$0.62		& 96.28$\pm$0.39			\\
M\textsuperscript{3}SDA
& 95.14$\pm$0.31		& 93.59$\pm$0.40			& 99.49$\pm$0.53	
& \textbf{99.86$\pm$0.19}		& 97.02$\pm$0.36			\\
M\textsuperscript{3}SDA-$\beta$
& 94.36$\pm$0.26		& 91.70$\pm$0.71			& 99.75$\pm$0.35	
& 99.39$\pm$0.15		& 96.30$\pm$0.37			\\
\method (n=2)
& \textbf{95.74$\pm$0.29}		& \textbf{93.91$\pm$0.28}			& 99.87$\pm$0.28	
& \textbf{99.86$\pm$0.19}		& \textbf{97.35$\pm$0.26}			\\
\bottomrule
\end{tabular}
}}
\resizebox{0.7 \textwidth}{!}{
\subtable[Amazon Reviews]{
\begin{tabular}{@{}lccccc@{}}
\toprule
\textbf{Method}
& \textbf{D+E+K/B}		& \textbf{B+E+K/D}			& \textbf{B+D+K/E}
& \textbf{B+D+E/K}		& \textbf{Average}			\\ \midrule
MLP (SC)	
& 79.76$\pm$0.70		& \underline{82.18}$\pm$0.59    	& 84.42$\pm$0.27
& 87.23$\pm$0.51    	& 83.40$\pm$0.52		\\ \midrule
MLP (SB)		
& 79.00$\pm$0.92		& 80.38$\pm$0.61    	& 84.76$\pm$0.45
& 87.46$\pm$0.36    	& 82.90$\pm$0.58		\\ \midrule
DCTN
& 78.92$\pm$0.56		& 81.22$\pm$1.01		& 83.56$\pm$1.52		
& 86.47$\pm$0.71		& 82.54$\pm$0.95	\\
M\textsuperscript{3}SDA
& 78.97$\pm$0.79		& 80.51$\pm$0.99		& 83.63$\pm$0.68
& 85.99$\pm$0.85		& 82.27$\pm$0.83		\\
M\textsuperscript{3}SDA-$\beta$
& 80.26$\pm$0.43		& 81.80$\pm$0.72		& 85.02$\pm$0.34
& 86.99$\pm$0.56		& 83.52$\pm$0.51		\\
\method (n=2)
& \textbf{81.14$\pm$0.29}	& \textbf{83.13$\pm$0.45}	& \textbf{86.47$\pm$0.35}
& \textbf{88.53$\pm$0.33}	& \textbf{84.82$\pm$0.35}	\\
\bottomrule
\end{tabular}
}}
\vspace{5mm}
\end{table}

\subsection{Performance Evaluation}
\label{520evaluation}

We evaluate the performance of \method with $n=2$ against the competitors.
We repeat experiments for each setting five times and report the mean and the standard deviation.
The results are summarized in Tables \ref{tab:accuracy}.
Note that \method provides the best accuracy in all the datasets,
showing their consistent superiority in both image datasets (Digits-Five, Office-Caltech10) and text dataset (Amazon Reviews).
\blue{
The enhancement is especially remarkable when MNIST-M is the target domain in Digits-Five, improving the accuracy by $11.48\%$ compared to the state-of-the-art methods.
}

\subsection{Ablation Study}
\label{530ablation}

\blue{
We perform an ablation study on Digits-Five to identify what exactly enhances the performance of \method.
We compare \method with 3 of its variants: MDAP-L, MDAP, and \method-R.
MDAP-L has the same strategies as M\textsuperscript{3}SDA, aligning moments regardless of the labels of the data.
MDAP trains the model without ensemble learning theme.
\method-R exploits ensemble learning strategy but relies on randomness without extractor classifier and feature diversifying loss.

The results are shown in Table \ref{tab:ablation}.
By comparing MDAP-L and MDAP, we observe that considering labels in moment matching plays a significant role in extracting domain-invariant features.
The remarkable performance gap between MDAP and \method with $n=2$ verifies the effectiveness of ensemble learning. 
On the other hand, the performance of \method-R and \method have little difference.
It indicates that two feature extractors trained independently without any diversifying techniques are unlikely to be correlated even though it resorts to randomness.
}

\subsection{Effects of Ensemble}
\label{540ensemble}
\blue{
We vary $n$, the number of pairs of feature extractor and label classifier, and repeat the performance evaluation on Digits-Five.
The results are summarized in Table \ref{tab:ablation}.
While ensemble of two pairs gives much better performance than the model with one single pair, using more than two pairs rarely brings further improvement.
This result demonstrates that two pairs of feature extractor and label classifier are able to cover most information without losing important label information in Digits-Five.
It is notable that increasing $n$ sometimes brings small performance degradation.
As more feature extractors are adopted to obtain final features, the complexity of final features increases.
It is harder for the final label classifiers to manage the features with high complexity compared to the simple ones.
This deteriorates the performance when we exploit more than two feature extractors.
}

\begin{table}[t]
\caption{Experiments with \method and its variants.}
\centering
\label{tab:ablation}
\resizebox{1.0 \textwidth}{!}{
\begin{tabular}{lcccccc}
\toprule
\textbf{Method}
& \textbf{M+S+D+U/T}	& \textbf{T+S+D+U/M}		& \textbf{T+M+D+U/S}
& \textbf{T+M+S+U/D}	& \textbf{T+M+S+D/U}		& \textbf{Average}			\\ \midrule
MDAP-L
& 98.75$\pm$0.05		& 67.77$\pm$0.71			& 81.75$\pm$0.61
& 88.51$\pm$0.29		& 97.17$\pm$0.22			& 86.79$\pm$0.38	 \\
MDAP
& 99.14$\pm$0.06		& 79.32$\pm$0.73			& 84.77$\pm$0.39
& 91.91$\pm$0.05		& 98.49$\pm$0.16			& 90.73$\pm$0.28 \\
\method-R (n=2)
& \textbf{99.34$\pm$0.05} 		& 83.24$\pm$0.81 		& 86.96$\pm$0.34
& 92.88$\pm$0.15		& \textbf{98.56$\pm$0.17}			& 92.20$\pm$0.30 \\
\method (n=2)
& 99.31$\pm$0.04	& \textbf{83.95$\pm$0.90}	& 86.93$\pm$0.39
& \textbf{93.15$\pm$0.17}	& 98.49$\pm$0.08	& \textbf{92.37$\pm$0.31} \\
\method (n=3)
& 99.31$\pm$0.05 	& 82.78$\pm$0.67 	& \textbf{87.10$\pm$0.29}
& 92.85$\pm$0.24 	& 98.48$\pm$0.09 	& 92.10$\pm$0.27 \\
\method (n=4)
& 99.30$\pm$0.07 	& 82.74$\pm$0.55 	& 86.65$\pm$0.41
& 92.86$\pm$0.15 	& 98.50$\pm$0.08 	& 92.01$\pm$0.25 \\
\bottomrule
\end{tabular}
}
\vspace{5mm}
\end{table}

\section{Conclusion}
\label{sec:conclusion}
We propose \method, a novel framework for the multi-source domain adaptation problem. 
\blue{
\method overcomes the problems in the existing methods of 
not directly addressing conditional distributions of data $p(\mathbf{x}|y)$,
not fully exploiting knowledge of target data,
and missing large amount of label information.
\method aligns data from multiple source domains and the target domain considering the data labels, 
and exploits pseudolabels for unlabeled target data.
\method further enhances the performance by introducing multiple feature extractors.
Our framework exhibits superior performance on both image and text classification tasks.}
Considering labels in moment matching and adding ensemble learning theme is shown to bring remarkable performance enhancement through ablation study.
Future works include extending our approach to other tasks such as regression, which may require modification in the pseudolabeling method.

\clearpage
\bibliography{iclr2021_conference}
\bibliographystyle{iclr2021_conference}

\newpage
\appendix
\section{Appendix}
\subsection{Proof for Theorem 1}

\setcounter{definition}{0}

We prove Theorem 1 in the paper by extending the proof in the existing studies (\cite{Ben-DavidBCKPV10, PengBXHSW19}).
We first define $k$-th order label-wise moment divergence $d_{LM,k}$, and disagreement ratio $\epsilon_{\mathcal{D}}{(h_1,h_2)}$ of the two hypotheses $h_1,h_2\in\mathcal{H}$ on the domain $\mathcal{D}$.

\begin{definition}
Let
$\mathcal{D}$ and $\mathcal{D}'$ be two domains over an input space $\mathcal{X}\subset\mathbb{R}^n$ where $n$ is the dimension of the inputs.
Let $\mathcal{C}$ be the set of the labels,
and $\mu_c(\mathbf{x})$ and $\mu'_c(\mathbf{x})$ be the data distributions given that the label is $c$,
i.e. $\mu_c(\mathbf{x})=\mu(\mathbf{x}|y=c)$ and $\mu'_c(\mathbf{x})=\mu'(\mathbf{x}|y=c)$ for the data distribution $\mu$ and $\mu'$ on the domains $\mathcal{D}$ and $\mathcal{D}'$, respectively.
Then, the $k$-th order label-wise moment divergence $d_{LM,k}{(\mathcal{D},\mathcal{D}')}$ of the two domains $\mathcal{D}$ and $\mathcal{D}'$ over $\mathcal{X}$ is defined as
\begin{equation}
\centering
	d_{LM,k}{(\mathcal{D},\mathcal{D}')}=
	\sum\limits_{c \in \mathcal{C}}{
	\sum\limits_{\mathbf{i}\in\Delta_{k}}{
	\left|
	p(c)\int_{\mathcal{X}}{\mu_{c}(\mathbf{x})\prod\limits_{j=1}^{n}{(x_j)^{i_j}}d\mathbf{x}} -
	p'(c)\int_{\mathcal{X}}{\mu'_{c}(\mathbf{x})\prod\limits_{j=1}^{n}{(x_j)^{i_j}}d\mathbf{x}}
	\right|
	}
	},
\end{equation}
where
$\Delta_{k}=\{\mathbf{i}=(i_1,\dots,i_n)\in\mathbb{N}_{0}^{n} | \sum_{j=1}^{n}{i_j}=k\}$ is the set of the tuples of the nonnegative integers, which add up to $k$,
$p(c)$ and $p'(c)$ are the probability that arbitrary data from $\mathcal{D}$ and $\mathcal{D}'$ to be labeled as $c$ respectively, and
the data $\mathbf{x}\in\mathcal{X}$ is expressed as $(x_1,\dots,x_n)$.\qed
\end{definition}

\begin{definition}
	Let $\mathcal{D}$ be a domain over an input space $\mathcal{X}\subset\mathbb{R}^n$ with the data distribution $\mu(\mathbf{x})$.
	Then, we define the disagreement ratio $\epsilon_{\mathcal{D}}{(h_1,h_2)}$ of the two hypotheses $h_1,h_2\in\mathcal{H}$ on the domain $\mathcal{D}$ as
	\begin{equation}
		\epsilon_{\mathcal{D}}{(h_1,h_2)}=\Pr_{\mathbf{x}\sim\mu(\mathbf{x})}{\left[h_1(\mathbf{x})\neq h_2(\mathbf{x})\right]}.
	\end{equation}\qed
\end{definition}

\begin{theorem}
\label{theorem2}
	(Stone-Weierstrass Theorem (\cite{stone37}))
	Let $K$ be a compact subset of $\mathbb{R}^n$ and $f:K\rightarrow\mathbb{R}$ be a continuous function.
	Then, for every $\epsilon>0$, there exists a polynomial, $P:K\rightarrow\mathbb{R}$, such that
	\begin{equation}
		\sup_{\mathbf{x}\in K}{\left|f(\mathbf{x})-P(\mathbf{x})\right|} < \epsilon.
	\end{equation}\qed
\end{theorem}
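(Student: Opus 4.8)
The statement is the multivariate Weierstrass approximation theorem, and the plan is to prove it constructively via Bernstein polynomials after two standard reductions. First, since $K$ is compact it is bounded, so $K\subseteq[a,b]^n$ for some $a<b$; the affine substitution $\mathbf{x}\mapsto(\mathbf{x}-a\mathbf{1})/(b-a)$ is a bijection of $[a,b]^n$ onto the unit cube $[0,1]^n$ that carries polynomials to polynomials and preserves sup-norms, so I may assume $K\subseteq[0,1]^n$ without loss of generality. Second, because $K$ is closed in the (metric, hence normal) space $\mathbb{R}^n$, the Tietze extension theorem yields a continuous extension $\tilde f:[0,1]^n\to\mathbb{R}$ of $f$; this $\tilde f$ is bounded and, being continuous on a compact set, uniformly continuous. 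It then suffices to approximate $\tilde f$ uniformly on $[0,1]^n$ by a polynomial $P$, since $\sup_{\mathbf{x}\in K}|f(\mathbf{x})-P(\mathbf{x})|\le\sup_{\mathbf{x}\in[0,1]^n}|\tilde f(\mathbf{x})-P(\mathbf{x})|$.

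Next I introduce the tensor-product Bernstein polynomials: for each integer $m\ge1$ define
\begin{equation}
B_m(\mathbf{x})=\sum_{\mathbf{k}}\tilde f(\mathbf{k}/m)\prod_{j=1}^{n}\binom{m}{k_j}x_j^{k_j}(1-x_j)^{m-k_j},
\end{equation}
where the sum runs over all $\mathbf{k}=(k_1,\dots,k_n)$ with $0\le k_j\le m$ and $\mathbf{k}/m=(k_1/m,\dots,k_n/m)$. Each $B_m$ is a polynomial in $\mathbf{x}$. The decisive observation is the probabilistic reading $B_m(\mathbf{x})=\mathbb{E}[\tilde f(\mathbf{S}/m)]$, where $\mathbf{S}=(S_1,\dots,S_n)$ has independent coordinates $S_j\sim\mathrm{Binomial}(m,x_j)$, so that $\mathbb{E}[S_j/m]=x_j$ and $\Var(S_j/m)=x_j(1-x_j)/m\le 1/(4m)$.

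The convergence step shows $B_m\to\tilde f$ uniformly as $m\to\infty$. Fix $\epsilon>0$. By uniform continuity choose $\delta>0$ so that $|\tilde f(\mathbf{u})-\tilde f(\mathbf{v})|<\epsilon/2$ whenever $\|\mathbf{u}-\mathbf{v}\|<\delta$, and set $M=\sup|\tilde f|$. Writing $B_m(\mathbf{x})-\tilde f(\mathbf{x})=\mathbb{E}[\tilde f(\mathbf{S}/m)-\tilde f(\mathbf{x})]$, I split the expectation according to the event $\{\|\mathbf{S}/m-\mathbf{x}\|<\delta\}$ and its complement: on the former the integrand is below $\epsilon/2$, while on the latter it is at most $2M$, and Chebyshev's inequality with the variance bound gives $\Pr[\|\mathbf{S}/m-\mathbf{x}\|\ge\delta]\le\sum_{j}\Var(S_j/m)/\delta^2\le n/(4m\delta^2)$, which is below $\epsilon/(4M)$ once $m$ is large, uniformly in $\mathbf{x}$. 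Hence $\sup_{[0,1]^n}|B_m-\tilde f|<\epsilon$ for all large $m$, and taking $P=B_m$ for such an $m$ finishes the argument.

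The main obstacle is this concentration step: bounding the contribution of grid points far from $\mathbf{x}$ uniformly in $\mathbf{x}$. In one dimension it is the classical Chebyshev estimate for the binomial, and the multivariate case goes through precisely because the coordinates of $\mathbf{S}$ are independent, so variances add and the uniform bound $x_j(1-x_j)\le 1/4$ keeps the estimate free of $\mathbf{x}$. A shorter alternative is to quote the general Stone–Weierstrass theorem directly: the polynomials form a subalgebra of $C(K,\mathbb{R})$ that contains the constants and separates points, since the coordinate functions already distinguish any two distinct points of $K$, and therefore this subalgebra is uniformly dense, which is exactly the asserted approximation.
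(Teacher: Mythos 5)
Your proof is correct, but it takes a genuinely different route from the paper, which offers no argument at all for this statement: Theorem 2 is invoked as a black-box citation to Stone (1937), i.e., precisely the algebra-separates-points form of Stone--Weierstrass that you sketch in your closing remark (polynomials on $K$ form a subalgebra of $C(K,\mathbb{R})$ containing the constants and separating points via the coordinate functions). Your main argument is instead the classical constructive proof via tensor-product Bernstein polynomials, and it is sound: the two reductions are both needed and correctly justified --- the affine rescaling because Bernstein polynomials live on the cube, and the Tietze extension because $f$ is defined only on $K$ while the nodes $\mathbf{k}/m$ need not lie in $K$; the concentration step is the right multivariate Chebyshev bound, $\Pr\left[\left\Vert \mathbf{S}/m-\mathbf{x}\right\Vert \ge\delta\right]\le \sum_{j}\mathrm{Var}(S_j/m)/\delta^2\le n/(4m\delta^2)$, valid because the coordinates of $\mathbf{S}$ are independent and $x_j(1-x_j)\le 1/4$ keeps it uniform in $\mathbf{x}$. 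What each approach buys: the citation route is instantaneous and more general, while your route is self-contained, produces an explicit approximant, and is quantitative --- the error splits as $\epsilon/2 + 2Mn/(4m\delta^2)$, giving an $O(1/m)$ rate once $\delta$ is fixed by the modulus of continuity, which is strictly more information than the paper's Lemma 1 needs (there the polynomial is used purely existentially, so either proof slots in). One cosmetic point: as written your estimate yields $\sup|B_m-\tilde f|\le\epsilon$ rather than the strict inequality in the statement; running the same argument with $\epsilon/2$ in place of $\epsilon$ (or noting the good-event bound is strict) closes this trivially.
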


Theorem \ref{theorem2} indicates that continuous functions on a compact subset of $\mathbb{R}^n$ are approximated with polynomials.
We next formulate the discrepancy of the two domains using the disagreement ratio and bound it with the label-wise moment divergence.

\begin{lemma}
\label{lemma1}
	Let $\mathcal{D}$ and $\mathcal{D}'$ be two domains over an input space $\mathcal{X}\in\mathbb{R}^n$, where n is the dimension of the inputs.
	Then, for any hypotheses $h_1,h_2\in\mathcal{H}$ and any $\epsilon>0$, there exist $n_\epsilon\in\mathbb{N}$ and a constant $a_{n_\epsilon}$ such that
	\begin{equation}
		\left|\epsilon_{\mathcal{D}}{(h_1,h_2)}-\epsilon_{\mathcal{D}'}{(h_1,h_2)}\right| \le
		{1\over2}a_{n_\epsilon}\sum_{k=1}^{n_\epsilon}{d_{LM,k}(\mathcal{D},\mathcal{D}')}+\epsilon.
	\end{equation}\qed
\end{lemma}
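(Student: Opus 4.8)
The plan is to write both disagreement ratios as integrals of the \emph{common} indicator $g(\mathbf{x}) = \mathbf{1}_{h_1(\mathbf{x}) \neq h_2(\mathbf{x})}$ against the respective densities, and then to replace $g$ by a polynomial so that the integrals collapse into finite combinations of moments, which are exactly the quantities $d_{LM,k}$ controls. First I would condition on the label: since $\mu(\mathbf{x}) = \sum_{c \in \mathcal{C}} p(c)\mu_c(\mathbf{x})$ and likewise for $\mathcal{D}'$, I can write
\begin{equation}
\epsilon_{\mathcal{D}}(h_1,h_2) - \epsilon_{\mathcal{D}'}(h_1,h_2) = \sum_{c \in \mathcal{C}} \left( p(c) \int_{\mathcal{X}} \mu_c(\mathbf{x}) g(\mathbf{x}) d\mathbf{x} - p'(c) \int_{\mathcal{X}} \mu'_c(\mathbf{x}) g(\mathbf{x}) d\mathbf{x} \right).
\end{equation}

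Next I would invoke Theorem~\ref{theorem2} (Stone--Weierstrass) on the compact input space $\mathcal{X}$ to produce, for a tolerance fixed later, a polynomial $P(\mathbf{x}) = \sum_{k=0}^{n_\epsilon} \sum_{\mathbf{i} \in \Delta_k} c_{\mathbf{i}} \prod_{j=1}^{n} (x_j)^{i_j}$ of some degree $n_\epsilon$ that approximates $g$ uniformly. Substituting $g = P + (g-P)$ into the identity above splits it into a polynomial part and a remainder. Because each $\mu_c$ and $\mu'_c$ integrates to one and $\sum_{c} p(c) = \sum_{c} p'(c) = 1$, the remainder is bounded by $2\sup_{\mathbf{x}}|g(\mathbf{x}) - P(\mathbf{x})|$, which I arrange to be at most $\epsilon$.

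For the polynomial part I would expand $P$ into monomials and interchange the finite sums, grouping by the multi-index $\mathbf{i}$. The key observation is that the constant ($k=0$) term contributes $c_{\mathbf{0}}\sum_{c}(p(c) - p'(c)) = 0$, so only orders $k \ge 1$ survive; this is precisely why the bound ranges over $k = 1,\dots,n_\epsilon$ rather than $k=0$. Setting $a_{n_\epsilon} = 2\max_{\mathbf{i}}|c_{\mathbf{i}}|$ to bound each coefficient, and recognizing that summing $\left|p(c)\int_{\mathcal{X}} \mu_c \prod_j (x_j)^{i_j} d\mathbf{x} - p'(c)\int_{\mathcal{X}} \mu'_c \prod_j (x_j)^{i_j} d\mathbf{x}\right|$ over $\mathbf{i} \in \Delta_k$ and $c \in \mathcal{C}$ is exactly $d_{LM,k}(\mathcal{D},\mathcal{D}')$, then yields the claimed term $\tfrac{1}{2}a_{n_\epsilon}\sum_{k=1}^{n_\epsilon} d_{LM,k}(\mathcal{D},\mathcal{D}')$. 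Combining with the remainder bound gives the lemma.

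The hard part will be the appeal to Stone--Weierstrass itself: the disagreement indicator $\mathbf{1}_{h_1 \neq h_2}$ is in general discontinuous, so Theorem~\ref{theorem2} does not apply to $g$ verbatim. I would resolve this by first approximating $g$ in $L^1(\mu_c)$ and $L^1(\mu'_c)$ simultaneously by a continuous function $\tilde{g}$ on $\mathcal{X}$ (continuous functions are dense in $L^1$ of a regular Borel probability measure on a compact set), and only then applying Stone--Weierstrass to $\tilde{g}$; combining the $L^1$ error of $g \approx \tilde{g}$ with the uniform error of $\tilde{g} \approx P$ keeps the total remainder below $\epsilon$. This two-stage approximation, together with the implicit compactness (boundedness) of $\mathcal{X}$ needed both to invoke Theorem~\ref{theorem2} and to guarantee finiteness of the moments, is the only genuinely delicate point; everything else is bookkeeping of finite sums.
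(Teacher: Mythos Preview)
Your proposal is correct and follows essentially the same route as the paper: write the disagreement difference as an integral of the indicator against $\mu-\mu'$, pass to a continuous approximant via $L^1$ density and then to a polynomial via Stone--Weierstrass, condition on the label to decompose $\mu=\sum_c p(c)\mu_c$, expand into monomials, and bound the coefficients to land on $d_{LM,k}$. If anything your version is tidier than the paper's---you correctly treat the continuous approximation as an $L^1$ (not pointwise) bound and you explicitly explain why the $k=0$ term drops out, whereas the paper states a uniform bound $|\mathbf{1}_{h_1\neq h_2}-f|\le\epsilon/4$ that $L^1$ density does not actually deliver and simply starts its polynomial at $k=1$ without comment.
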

\begin{proof}
	Let the domains $\mathcal{D}$ and $\mathcal{D}'$ have the data distribution of $\mu(\mathbf{x})$ and $\mu'(\mathbf{x})$, respectively,  over an input space $\mathcal{X}$, which is a compact subset of $\mathbb{R}^n$, where $n$ is the dimension of the inputs.
	For brevity, we denote $\left|\epsilon_\mathcal{D}(h_1,h_2)-\epsilon_\mathcal{D'}(h_1,h_2)\right|$ as $\Delta_{\mathcal{D},\mathcal{D}'}$.
	Then,
	\begin{equation}
	\begin{split}
		\Delta_{\mathcal{D},\mathcal{D}'}&=
		\left|\epsilon_{\mathcal{D}}{(h_1,h_2)}-\epsilon_{\mathcal{D}'}{(h_1,h_2)}\right| \\&\le
		\sup_{h_1,h_2\in\mathcal{H}}{\left|\epsilon_\mathcal{D}(h_1,h_2)-\epsilon_{\mathcal{D}'}(h_1,h_2)\right|}\\ &=
		\sup_{h_1,h_2\in\mathcal{H}}{\left|
		\Pr_{\mathbf{x}\sim\mu(\mathbf{x})}{\left[h_1(\mathbf{x})\neq h_2(\mathbf{x})\right]}-
		\Pr_{\mathbf{x}\sim\mu'(\mathbf{x})}{\left[h_1(\mathbf{x})\neq h_2(\mathbf{x})\right]}
		\right|}\\ &=
		\sup_{h_1,h_2\in\mathcal{H}}{\left|\int_{\mathcal{X}}{\mu(\mathbf{x})\mathbf{1}_{h_1(\mathbf{x})\neq h_2(\mathbf{x})}d\mathbf{x}}-
		\int_{\mathcal{X}}{\mu'(\mathbf{x})\mathbf{1}_{h_1(\mathbf{x})\neq h_2(\mathbf{x})}d\mathbf{x}}
		\right|}.
	\end{split}
	\end{equation}
	For any hypotheses $h_1,h_2$, the indicator function
	$\mathbf{1}_{h_1(\mathbf{x})\neq h_2(\mathbf{x})}$
	is Lebesgue integrable on $\mathcal{X}$, \ie $\mathbf{1}_{h_1(\mathbf{x})\neq h_2(\mathbf{x})}$ is a $L^1$ function.
	Since a set of continuous functions is dense in $L^1(\mathcal{X})$,
	for every $\epsilon>0$,
	there exists a continuous $L^1$ function $f$ defined on $\mathcal{X}$ such that
	\begin{equation}
		\left|\mathbf{1}_{h_1(\mathbf{x})\ne h_2(\mathbf{x})}-f(\mathbf{x})\right| \le {\epsilon \over 4}
	\end{equation}
	for every $\mathbf{x}\in\mathcal{X}$, and
	the fixed $h_1$ and $h_2$ that drive Equation 5 to the supremum.
	Accordingly,
	\begin{equation}
	\begin{split}
		f(\mathbf{x})-{\epsilon \over 4} \le \mathbf{1}_{h_1(\mathbf{x})\neq h_2(\mathbf{x})} \le f(\mathbf{x})+{\epsilon \over 4}.
	\end{split}
	\end{equation}
	By integrating every term in the inequality over $\mathcal{X}$, the inequality,
	\begin{equation}
		\int_{\mathcal{X}}{\mu(\mathbf{x})f(\mathbf{x})d\mathbf{x}}-{\epsilon\over 4} \le
		\int_{\mathcal{X}}{\mu(\mathbf{x})\mathbf{1}_{h_1(\mathbf{x})\neq h_2(\mathbf{x})}d\mathbf{x}} \le
		\int_{\mathcal{X}}{\mu(\mathbf{x})f(\mathbf{x})d\mathbf{x}}+{\epsilon\over 4},
	\end{equation}
	follows.
	Likewise, the same inequality on the domain $\mathcal{D}'$ with $\mu'$ instead of $\mu$ holds.
	By subtracting the two inequalities and reformulating it, the inequality,
	\begin{equation}
	\begin{split}
		-{\epsilon \over 2} \le
		\left|
		\int_{\mathcal{X}}{\mu(\mathbf{x})\mathbf{1}_{h_1(\mathbf{x})\ne h_2(\mathbf{x})}d\mathbf{x}}-
		\int_{\mathcal{X}}{\mu'(\mathbf{x})\mathbf{1}_{h_1(\mathbf{x})\ne h_2(\mathbf{x})}d\mathbf{x}}
		\right|-\\
		\left|
		\int_{\mathcal{X}}{\mu(\mathbf{x})f(\mathbf{x})d\mathbf{x}}-
		\int_{\mathcal{X}}{\mu'(\mathbf{x})f(\mathbf{x})d\mathbf{x}}
		\right| \le
		{\epsilon \over 2},
	\end{split}
	\end{equation}
	is induced.
	By substituting the inequality in Equation 9 to the Equation 5,
	\begin{equation}
	\begin{split}
		\Delta_{\mathcal{D},\mathcal{D}'} &\le
		\left|
		\int_{\mathcal{X}}{\mu(\mathbf{x})f(\mathbf{x})d\mathbf{x}}-
		\int_{\mathcal{X}}{\mu'(\mathbf{x})f(\mathbf{x})d\mathbf{x}}
		\right|+{\epsilon \over 2}.
	\end{split}
	\end{equation}
	By the Theorem \ref{theorem2}, there exists a polynomial $P(\mathbf{x})$ such that
	\begin{equation}
		\sup_{\mathbf{x}\in\mathcal{X}}{\left|f(\mathbf{x})-P(\mathbf{x})\right|}<{\epsilon \over 4},
	\end{equation}
	and the polynomial $P(\mathbf{x})$ is expressed as
	\begin{equation}
		P(\mathbf{x})=\sum\limits_{k=1}^{n_\epsilon}{
		\sum\limits_{\mathbf{i}\in\Delta_k}{
		\alpha_{\mathbf{i}}\prod\limits_{j=1}^{n}{(x_j)^{i_j}}
		}
		},
	\end{equation}
	where $n_\epsilon$ is the order of the polynomial,
	$\Delta_{k}=\{\mathbf{i}=(i_1,\dots,i_n)\in\mathbb{N}_{0}^{n} | \sum_{j=1}^{n}{i_j}=k\}$ is the set of the tuples of the nonnegative integers, which add up to $k$,
	$\alpha_{\mathbf{i}}$ is the coefficient of each term of the polynomial,
	and $\mathbf{x}=(x_1,x_2,\dots,x_n)$.
	By applying Equation 11 to the Equation 10 and substituting the expression in Equation 12,
	\begin{equation}
	\begin{split}
		\Delta_{\mathcal{D},\mathcal{D}'} &\le
		\left|
		\int_{\mathcal{X}}{\mu(\mathbf{x})P(\mathbf{x})d\mathbf{x}}-
		\int_{\mathcal{X}}{\mu'(\mathbf{x})P(\mathbf{x})d\mathbf{x}}
		\right|+\epsilon \\ &=
		\left|
		\int_{\mathcal{X}}{\mu(\mathbf{x}) \sum\limits_{k=1}^{n_\epsilon}{
		\sum\limits_{\mathbf{i}\in\Delta_k}{
		\alpha_{\mathbf{i}}\prod\limits_{j=1}^{n}{(x_j)^{i_j}}
		}
		} d\mathbf{x}}-
		\int_{\mathcal{X}}{\mu'(\mathbf{x})\sum\limits_{k=1}^{n_\epsilon}{
		\sum\limits_{\mathbf{i}\in\Delta_k}{
		\alpha_{\mathbf{i}}\prod\limits_{j=1}^{n}{(x_j)^{i_j}}
		}
		}d\mathbf{x}}
		\right|+\epsilon \\&\le
		\sum\limits_{k=1}^{n_\epsilon}{
		\left|
		\sum\limits_{\mathbf{i}\in\Delta_k}{
		\alpha_\mathbf{i}\int_{\mathcal{X}}{\mu(\mathbf{x})\prod\limits_{j=1}^{n}{(x_j)^{i_j}}d\mathbf{x}}-
		\alpha_\mathbf{i}\int_{\mathcal{X}}{\mu'(\mathbf{x})\prod\limits_{j=1}^{n}{(x_j)^{i_j}}d\mathbf{x}}
		}
		\right|
		}+\epsilon \\&\le
		\sum\limits_{k=1}^{n_\epsilon}{
		\sum\limits_{\mathbf{i}\in\Delta_k}{
		\left|\alpha_\mathbf{i}\right|
		\left|
		\int_{\mathcal{X}}{\mu(\mathbf{x})\prod\limits_{j=1}^{n}{(x_j)^{i_j}}d\mathbf{x}}-
		\int_{\mathcal{X}}{\mu'(\mathbf{x})\prod\limits_{j=1}^{n}{(x_j)^{i_j}}d\mathbf{x}}
		\right|
		}
		}+\epsilon \\&=
		\sum\limits_{k=1}^{n_\epsilon}{
		\sum\limits_{\mathbf{i}\in\Delta_k}{
		\left|\alpha_\mathbf{i}\right|
		\left|
		\int_{\mathcal{X}}{
		\sum\limits_{c\in\mathcal{C}}{p(c)\mu_c(\mathbf{x})}
		\prod\limits_{j=1}^{n}{(x_j)^{i_j}}d\mathbf{x}}-
		\int_{\mathcal{X}}{
		\sum\limits_{c\in\mathcal{C}}{p'(c)\mu_c'(\mathbf{x})}
		\prod\limits_{j=1}^{n}{(x_j)^{i_j}}d\mathbf{x}}
		\right|
		}
		}+\epsilon,
	\end{split}
	\end{equation}
	where $p(c)$ and $p'(c)$ are the probability that an arbitrary data is labeled as class $c$ in domain $\mathcal{D}$ and $\mathcal{D}'$, respectively, 
	and $\mu_c(\mathbf{x})=\mu(\mathbf{x}|y=c)$ and $\mu'_c(\mathbf{x})=\mu'(\mathbf{x}|y=c)$ are the data distribution given that the data is labeled as class $c$ on domain $\mathcal{D}$ and $\mathcal{D}'$, respectively.
	For $a_{\Delta_k}=\max_{\mathbf{i}\in\Delta_k}{\left|\alpha_\mathbf{i}\right|}$,
	\begin{equation}
	\begin{split}
		\Delta_{\mathcal{D},\mathcal{D}'} &\le
		\sum\limits_{k=1}^{n_\epsilon}{
		a_{\Delta_k}
		\sum\limits_{\mathbf{i}\in\Delta_k}{
		\left|
		\int_{\mathcal{X}}{
		\sum\limits_{c\in\mathcal{C}}{p(c)\mu_c(\mathbf{x})}
		\prod\limits_{j=1}^{n}{(x_j)^{i_j}}d\mathbf{x}}-
		\int_{\mathcal{X}}{\sum\limits_{c\in\mathcal{C}}{p'(c)\mu'_c(\mathbf{x})}
		\prod\limits_{j=1}^{n}{(x_j)^{i_j}}d\mathbf{x}}
		\right|
		}
		}+\epsilon \\&\le
		\sum\limits_{k=1}^{n_\epsilon}{
		a_{\Delta_k}
		\sum\limits_{\mathbf{i}\in\Delta_k}{
		\sum\limits_{c\in\mathcal{C}}{
		\left|
		p(c)\int_{\mathcal{X}}{\mu_c(\mathbf{x})\prod\limits_{j=1}^{n}{(x_j)^{i_j}}}-
		p'(c)\int_{\mathcal{X}}{\mu'_c(\mathbf{x})\prod\limits_{j=1}^{n}{(x_j)^{i_j}}}
		\right|
		}
		}
		}+\epsilon \\&\le
		\sum\limits_{k=1}^{n_\epsilon}{
		a_{\Delta_k}d_{LM.k}(\mathcal{D},\mathcal{D}')
		}+\epsilon \\&\le
		{1 \over 2}a_{n_\epsilon}{
		\sum\limits_{k=1}^{n_\epsilon}{d_{LM,k}(\mathcal{D},\mathcal{D}')}+\epsilon
		},
	\end{split}
	\end{equation}
	for $a_{n_\epsilon}=2\max_{1\le k\le n_\epsilon}{a_{\Delta_k}}$.
\end{proof}

Let $N$ datasets be drawn from $N$ labeled source domains $\mathcal{S}_1,\mathcal{S}_2,\dots,\mathcal{S}_N$ respectively.
We denote $i$-ith source dataset $\mathbf{X}_{\mathcal{S}_i}$ as $\{(\mathbf{x}_j^{\mathcal{S}_i},y_j^{\mathcal{S}_i})\}_{j=1}^{n_{\mathcal{S}_i}}$. 
The empirical error of hypothesis $h$ in $i$-th source domain $\mathcal{S}_i$ estimated with $\mathbf{X}_{\mathcal{S}_i}$ is formulated as
$\hat\epsilon_{\mathcal{S}_i}(h)={1\over n_{\mathcal{S}_i}}\sum_{j=1}^{n_{\mathcal{S}_i}}{\mathbf{1}_{h(\mathbf{x}_j^{\mathcal{S}_i})\neq y_j^{\mathcal{S}_i}}}$.
Given a positive weight vector ${\bm \alpha}=(\alpha_1,\alpha_2,\dots,\alpha_N)$ such that $\sum_{i=1}^{N}{\alpha_i}=1$ and $\alpha_i\ge 0$, the weighted empirical source error is formulated as $\hat\epsilon_{\bm \alpha}(h)=\sum_{i=1}^{N}{\alpha_i \hat\epsilon_{\mathcal{S}_i}(h)}$.

\begin{lemma}
\label{lemma2}
	For $N$ source domains $\mathcal{S}_1,\mathcal{S}_2,\dots,\mathcal{S}_N$,
let 
	$n_{\mathcal{S}_i}$ be the number of samples from source domain $\mathcal{S}_i$,
	$m=\sum_{i=1}^{N}{n_{\mathcal{S}_i}}$ be the total number of samples from $N$ source domains,
	and ${\bm \beta}=(\beta_1,\beta_2,\dots,\beta_N)$ with $\beta_i={n_{\mathcal{S}_i} \over m}$.
	Let $\epsilon_{\bm \alpha}(h)$ be the weighted true source error which is the weighted sum of $\epsilon_{\mathcal{S}_i}(h)=\Pr_{\mathbf{x}\sim\mu(\mathbf{x})}{[h(\mathbf{x})\neq y]}$.
	Then,
	\begin{equation}
		\Pr{\left[\left|\hat\epsilon_{\bm\alpha}(h)-\epsilon_{\bm\alpha}(h)\right|\ge\epsilon\right]}\le
		2\exp\left({{-2m\epsilon^2} \over {\sum_{i=1}^{N}{\alpha_i^2 \over \beta_i}}}\right)
	\end{equation}
\end{lemma}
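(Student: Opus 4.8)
The plan is to recognize the weighted empirical source error as a weighted sum of independent, bounded random variables and then apply Hoeffding's inequality directly. First, I would unfold the definition
\begin{equation}
\hat\epsilon_{\bm\alpha}(h)=\sum_{i=1}^{N}\frac{\alpha_i}{n_{\mathcal{S}_i}}\sum_{j=1}^{n_{\mathcal{S}_i}}\mathbf{1}_{h(\mathbf{x}_j^{\mathcal{S}_i})\neq y_j^{\mathcal{S}_i}},
\end{equation}
so that $\hat\epsilon_{\bm\alpha}(h)$ becomes a weighted sum over all $m$ samples of the per-sample indicator losses $Z_{ij}=\mathbf{1}_{h(\mathbf{x}_j^{\mathcal{S}_i})\neq y_j^{\mathcal{S}_i}}$. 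Each $Z_{ij}$ is an independent Bernoulli variable taking values in $[0,1]$ with $\mathbb{E}[Z_{ij}]=\epsilon_{\mathcal{S}_i}(h)$, and hence by linearity $\mathbb{E}[\hat\epsilon_{\bm\alpha}(h)]=\sum_{i=1}^{N}\alpha_i\epsilon_{\mathcal{S}_i}(h)=\epsilon_{\bm\alpha}(h)$. This identifies $|\hat\epsilon_{\bm\alpha}(h)-\epsilon_{\bm\alpha}(h)|$ as the deviation of the weighted sum from its own mean.

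Next, I would track the range of each summand. Since $Z_{ij}\in[0,1]$ and its coefficient $\alpha_i/n_{\mathcal{S}_i}$ is nonnegative, the term $\frac{\alpha_i}{n_{\mathcal{S}_i}}Z_{ij}$ is confined to an interval of width $\alpha_i/n_{\mathcal{S}_i}$. Hoeffding's inequality for a sum of independent variables lying in intervals of widths $w_k$ produces a bound governed by $\sum_k w_k^2$, and here
\begin{equation}
\sum_{i=1}^{N}\sum_{j=1}^{n_{\mathcal{S}_i}}\left(\frac{\alpha_i}{n_{\mathcal{S}_i}}\right)^2=\sum_{i=1}^{N}n_{\mathcal{S}_i}\frac{\alpha_i^2}{n_{\mathcal{S}_i}^2}=\sum_{i=1}^{N}\frac{\alpha_i^2}{n_{\mathcal{S}_i}}.
\end{equation}
Invoking the two-sided Hoeffding bound over the full collection of $m$ independent variables therefore gives
\begin{equation}
\Pr\left[\left|\hat\epsilon_{\bm\alpha}(h)-\epsilon_{\bm\alpha}(h)\right|\ge\epsilon\right]\le 2\exp\left(\frac{-2\epsilon^2}{\sum_{i=1}^{N}\alpha_i^2/n_{\mathcal{S}_i}}\right).
\end{equation}

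Finally, I would substitute $n_{\mathcal{S}_i}=\beta_i m$, so that $\sum_i \alpha_i^2/n_{\mathcal{S}_i}=\frac{1}{m}\sum_i \alpha_i^2/\beta_i$; this converts the exponent into $-2m\epsilon^2/\sum_i(\alpha_i^2/\beta_i)$ and yields exactly the claimed inequality. The argument is essentially bookkeeping once Hoeffding is applied; the only points requiring care are the independence of the $Z_{ij}$ across all domains and samples, so that Hoeffding applies to the whole family of $m$ variables simultaneously, and the correct matching of each coefficient's squared width to the denominator. I expect this matching of the weights to the $\sum_i \alpha_i^2/\beta_i$ term to be the one place where a slip could occur, but it is routine rather than genuinely difficult.
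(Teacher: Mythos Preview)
Your proposal is correct. The paper itself does not give a proof here but simply cites \cite{Ben-DavidBCKPV10}; your argument via Hoeffding's inequality on the $m$ independent weighted indicator variables, with the squared-width computation $\sum_i n_{\mathcal{S}_i}(\alpha_i/n_{\mathcal{S}_i})^2=\sum_i\alpha_i^2/n_{\mathcal{S}_i}=\tfrac{1}{m}\sum_i\alpha_i^2/\beta_i$, is exactly the standard derivation from that reference.
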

\begin{proof}
	It has been proven in \cite{Ben-DavidBCKPV10}.
\end{proof}

We now turn our focus back to the Theorem \ref{theorem1} in the paper and complete the proof.

\setcounter{theorem}{0}

\begin{theorem}
\label{theorem1}
Let $\mathcal{H}$ be a hypothesis space of VC dimension $d$,
$n_{\mathcal{S}_i}$ be the number of samples from source domain $\mathcal{S}_i$,
$m=\sum_{i=1}^{N}{n_{\mathcal{S}_i}}$ be the total number of samples from $N$ source domains $\mathcal{S}_1,\dots,\mathcal{S}_N$,
and ${\bm\beta}=(\beta_1,\dots,\beta_N)$ with $\beta_{i}={n_{\mathcal{S}_i} \over m}$.
Let us define a hypothesis $\hat{h}={\arg\min}_{h\in\mathcal{H}}{\hat{\epsilon}_{\bm{\alpha}}(h)}$
that minimizes the weighted empirical source error,
and a hypothesis $h_{\mathcal{T}}^{\ast}={\arg\min}_{h\in\mathcal{H}}{\epsilon_{\mathcal{T}}(h)}$
that minimizes the true target error.
Then, for any $\delta\in(0,1)$ and $\epsilon>0$, there exist $N$ integers $n_{\epsilon}^{1},\dots,n_{\epsilon}^{N}$ and $N$ constants $a_{{n_\epsilon^1}},\dots,a_{{n_\epsilon^N}}$ such that
\begin{equation}
	\epsilon_{\mathcal{T}}(\hat{h}) \leq
	\epsilon_{\mathcal{T}}(h_{\mathcal{T}}^{\ast})+
	\eta_{\bm{\alpha},\bm{\beta},m,\delta}+
	\epsilon+
	\sum\limits_{i=1}^{N}{
	\alpha_{i}\left( 2\lambda_i+
	a_{n_\epsilon^i}\sum\limits_{k=1}^{n_\epsilon^i}{d_{LM,k}{(\mathcal{S}_i,\mathcal{T})}}
	\right)}
\end{equation}
with probability at least $1-\delta$, where
$
\eta_{\bm{\alpha},\bm{\beta},m,\delta}=
4\sqrt{\left(
\sum_{i=1}^{N}{\alpha_i^2 \over \beta_i}
\right)
\left(
2d\left(\log\left(2m\over d\right)+1\right)+2\log\left(4\over\delta\right)\over{m}
\right)
}
$
and
$\lambda_i=\min_{h\in\mathcal{H}}\{\epsilon_\mathcal{T}(h)+\epsilon_{\mathcal{S}_i}(h)\}$.\qed
\end{theorem}
\begin{proof}
\begin{equation}
	\left|\epsilon_{\bm\alpha}(h)-\epsilon_{\mathcal{T}}(h)\right| =
	\left|\sum\limits_{i=1}^{N}{\alpha_i \epsilon_{\mathcal{S}_i}(h)}-\epsilon_{\mathcal{T}}(h)\right| \le
	\sum\limits_{i=1}^{N}{\alpha_i \left|\epsilon_{\mathcal{S}_i}(h)-\epsilon_{\mathcal{T}}(h)\right|}.
\end{equation}
We define $h_i^*=\argmin\limits_{h\in\mathcal{H}}{\epsilon_{\mathcal{S}_i}(h)+\epsilon_{\mathcal{T}}(h)}$ for every $i=1,2,\dots,N$ for the following equations.
We also note that the 1-triangular inequality (\cite{CrammerKW08}) holds for binary classification tasks, \ie, $\epsilon_\mathcal{D}(h_1,h_2)\le\epsilon_\mathcal{D}(h_1,h_3)+\epsilon_\mathcal{D}(h_2,h_3)$ for any hypothesis $h_1,h_2,h_3\in\mathcal{H}$ and domain $\mathcal{D}$.
Then,
\begin{equation}
	\left|\epsilon_\mathcal{D}(h)-\epsilon_\mathcal{D}(h,h')\right|=
	\left|\epsilon_\mathcal{D}(h,l_\mathcal{D})-\epsilon_\mathcal{D}(h,h')\right|\le
	\epsilon_\mathcal{D}(l_\mathcal{D},h')=
	\epsilon_\mathcal{D}(h')
\end{equation}
for the ground truth labeling function $l_\mathcal{D}$ on the domain $\mathcal{D}$ and two hypotheses $h,h'\in\mathcal{H}$.
Applying the definition and the inequality to Equation 17,
\begin{equation}
\begin{split}
	\left|\epsilon_{\bm\alpha}(h)-\epsilon_{\mathcal{T}}(h)\right| &\le
	\sum\limits_{i=1}^{N}{\alpha_i
	\left(
	\left|\epsilon_{\mathcal{S}_i}(h)-\epsilon_{\mathcal{S}_i}(h,h_i^*)\right| +
	\left|\epsilon_{\mathcal{S}_i}(h,h_i^*)-\epsilon_\mathcal{T}(h,h_i^*)\right| +
	\left|\epsilon_\mathcal{T}(h,h_i^*)-\epsilon_\mathcal{T}(h)\right|
	\right)} \\&\le
	\sum\limits_{i=1}^{N}{\alpha_i
	\left(
	\epsilon_{\mathcal{S}_i}(h_i^*) +
	\left|\epsilon_{\mathcal{S}_i}(h,h_i^*)-\epsilon_\mathcal{T}(h,h_i^*)\right| +
	\epsilon_\mathcal{T}(h_i^*)
	\right)}
\end{split}
\end{equation}
By the definition of $h_i^*$, $\epsilon_{\mathcal{S}_i}(h_i^*)+\epsilon_\mathcal{T}(h_i^*)=\lambda_i$ for $\lambda_i=\min_{h\in\mathcal{H}}{\{\epsilon_{\mathcal{T}}(h)+\epsilon_{\mathcal{S}_i}(h)\}}$.
Additionally, according to Lemma \ref{lemma1},
for any $\epsilon>0$, there exists an integer $n_\epsilon$ and a constant $a_{n_\epsilon^i}$ such that
\begin{equation}
	\left|
	\epsilon_{\mathcal{S}_i}(h,h_i^*)-\epsilon_{\mathcal{T}}(h,h_i^*)
	\right|\le
	{{1\over2}{a_{n_\epsilon^i}}{\sum\limits_{k=1}^{n_\epsilon^i}{d_{LM,k}(\mathcal{S}_i,\mathcal{T})}}}+{\epsilon\over2}.
\end{equation}
By applying these relations,
\begin{equation}
\begin{split}
	\left|\epsilon_{\bm\alpha}(h)-\epsilon_{\mathcal{T}}(h)\right| &\le
	\sum\limits_{i=1}^{N}{
	\alpha_i\left(\lambda_i+{{1\over2}a_{n_\epsilon^i}\sum\limits_{k=1}^{n_\epsilon^i}{
	d_{LM,k}{(\mathcal{S}_i,\mathcal{T})}
	}}+{\epsilon\over2}\right)} \\&\le
	\sum\limits_{i=1}^{N}{
	\alpha_i\left(\lambda_i+{{1\over2}a_{n_\epsilon^i}\sum\limits_{k=1}^{n_\epsilon^i}{
	d_{LM,k}{(\mathcal{S}_i,\mathcal{T})}
	}}\right)}+{\epsilon\over2}.
\end{split}
\end{equation}
By Lemma \ref{lemma2} and the standard uniform convergence bound for hypothesis classes of finite VC dimension (\cite{Ben-DavidBCKPV10}),
\begin{equation}
\begin{split}
	\epsilon_{\mathcal{T}}(\hat{h}) &\le
	\epsilon_{\bm\alpha}(\hat{h})+{\epsilon\over2}+
	\sum\limits_{i=1}^{N}{
	\alpha_i\left(\lambda_i+{1\over2}a_{n_\epsilon^i}\sum\limits_{k=1}^{n_\epsilon^i}{d_{LM,k}(\mathcal{S}_i,\mathcal{T})}\right)} \\&\le
	\hat\epsilon_{\bm\alpha}(\hat{h})+{{1\over2}\eta_{{\bm\alpha},{\bm\beta},m,\delta}}+{\epsilon\over2}+
	\sum\limits_{i=1}^{N}{
	\alpha_i\left(\lambda_i+{1\over2}a_{n_\epsilon^i}\sum\limits_{k=1}^{n_\epsilon^i}{d_{LM,k}(\mathcal{S}_i,\mathcal{T})}\right)}\\&\le
	\hat\epsilon_{\bm\alpha}(h_{\mathcal{T}}^*)+{{1\over2}\eta_{{\bm\alpha},{\bm\beta},m,\delta}}+{\epsilon\over2}+
	\sum\limits_{i=1}^{N}{
	\alpha_i\left(\lambda_i+{1\over2}a_{n_\epsilon^i}\sum\limits_{k=1}^{n_\epsilon^i}{d_{LM,k}(\mathcal{S}_i,\mathcal{T})}\right)}\\&\le
	\epsilon_{\bm\alpha}(h_{\mathcal{T}}^*)+{\eta_{{\bm\alpha},{\bm\beta},m,\delta}}+{\epsilon\over2}+
	\sum\limits_{i=1}^{N}{
	\alpha_i\left(\lambda_i+{1\over2}a_{n_\epsilon^i}\sum\limits_{k=1}^{n_\epsilon^i}{d_{LM,k}(\mathcal{S}_i,\mathcal{T})}\right)}\\&\le
	\epsilon_{\mathcal{T}}(h_{\mathcal{T}}^*)+{\eta_{{\bm\alpha},{\bm\beta},m,\delta}}+{\epsilon}+
	\sum\limits_{i=1}^{N}{
	\alpha_i\left(2\lambda_i+a_{n_\epsilon^i}\sum\limits_{k=1}^{n_\epsilon^i}{d_{LM,k}(\mathcal{S}_i,\mathcal{T})}\right)}.
\end{split}
\end{equation}
The last inequality holds by Equation 21 with $h=h_\mathcal{T}^*$.
\end{proof}

\newpage
\subsection{Summary of datasets}
\label{subsec:dataset}

\begin{table}[h]
\centering
\caption{Summary of datasets.}
\label{tab:dataset}
\resizebox{1.0 \textwidth}{!}{
\begin{tabular}{@{}cllrrrr@{}}
\toprule
& \textbf{Datasets}
& \textbf{Features}	& \textbf{Labels} & \textbf{Training set} & \textbf{Test set} & \textbf{Properties}       \\
\midrule
\multirow{5}{*}{\textbf{Digits-Five}}
& MNIST
& 1x28x28  		& 10     	& 60000         & 10000    	& Grayscale images \\
& MNIST-M
& 3x32x32  		& 10     	& 59001         & 9001     	& RGB images       \\
& SVHN
& 3x32x32  		& 10     	& 73257         & 26032    	& RGB images       \\
& SynthDigits
& 3x32x32  		& 10     	& 479400        & 9553     	& RGB images       \\
& USPS
& 1x16x16  		& 10     	& 7291          & 2007     	& Grayscale images \\
\midrule
\multirow{4}{*}{\begin{tabular}[c]{@{}c@{}}\textbf{Office-}\\\textbf{Caltech10} \end{tabular}}
& Amazon
& 3x300x300 	& 10			& 958			& 958			& RGB images \\
& Caltech
& Variable		& 10			& 1123			& 1123			& RGB images \\
& DSLR
& 3x1000x1000	& 10			& 157			& 157			& RGB images \\
& Webcam
& Variable		& 10			& 295			& 295			& RGB images \\
\midrule
\multirow{4}{*}{\begin{tabular}[c]{@{}c@{}}\textbf{Amazon}\\ \textbf{Reviews} \end{tabular}}
& Books
& 5000     & 2      & 2000		& 4465		& 5000-dim vector  \\
& DVDs
& 5000     & 2      & 2000		& 3586 		& 5000-dim vector  \\
& Electronics
& 5000     & 2      & 2000		& 5681 		& 5000-dim vector  \\
& Kitchen appliances
& 5000     & 2      & 2000		& 5945 		& 5000-dim vector  \\
\bottomrule
\end{tabular}
}
\end{table}

\subsection{Qualitative Analysis for \method}
\begin{figure*}[h]
	\centering
	\subfigure[No Adaptation]{
	\includegraphics[width=0.18 \textwidth]{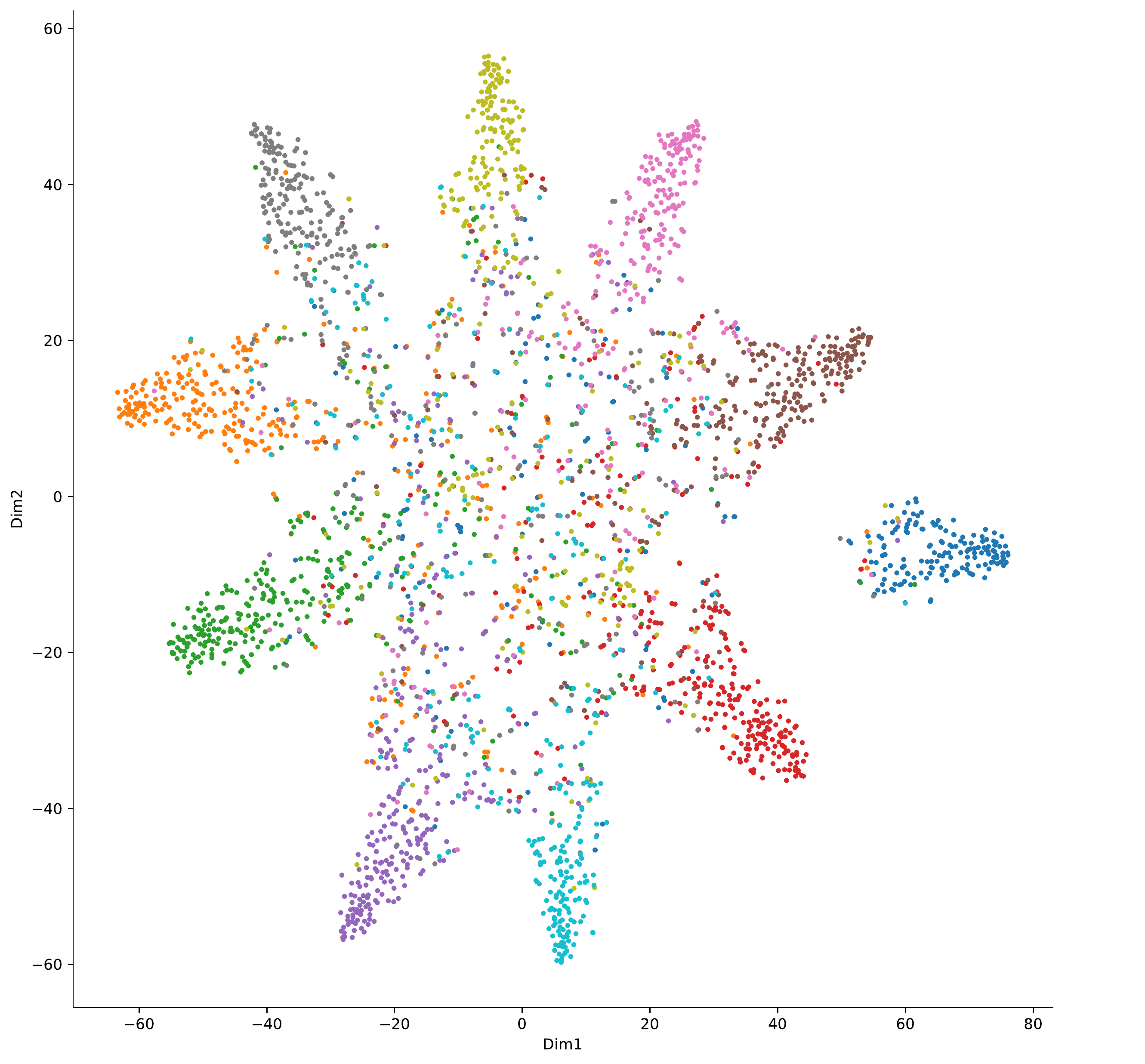}
	}
	\subfigure[MDAP-L]{
	\includegraphics[width=0.18 \textwidth]{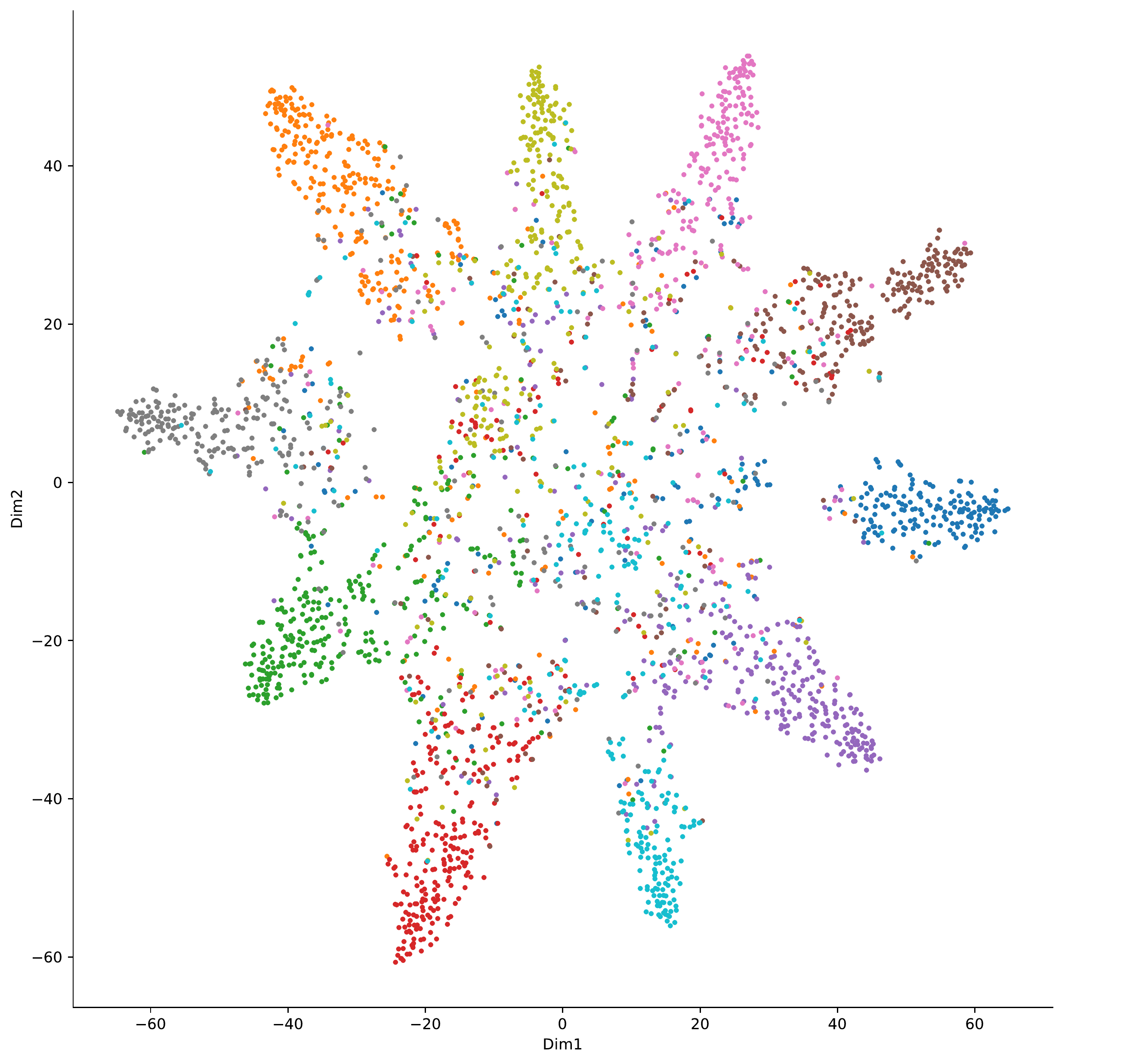}
	}
	\subfigure[MDAP]{
	\includegraphics[width=0.18 \textwidth]{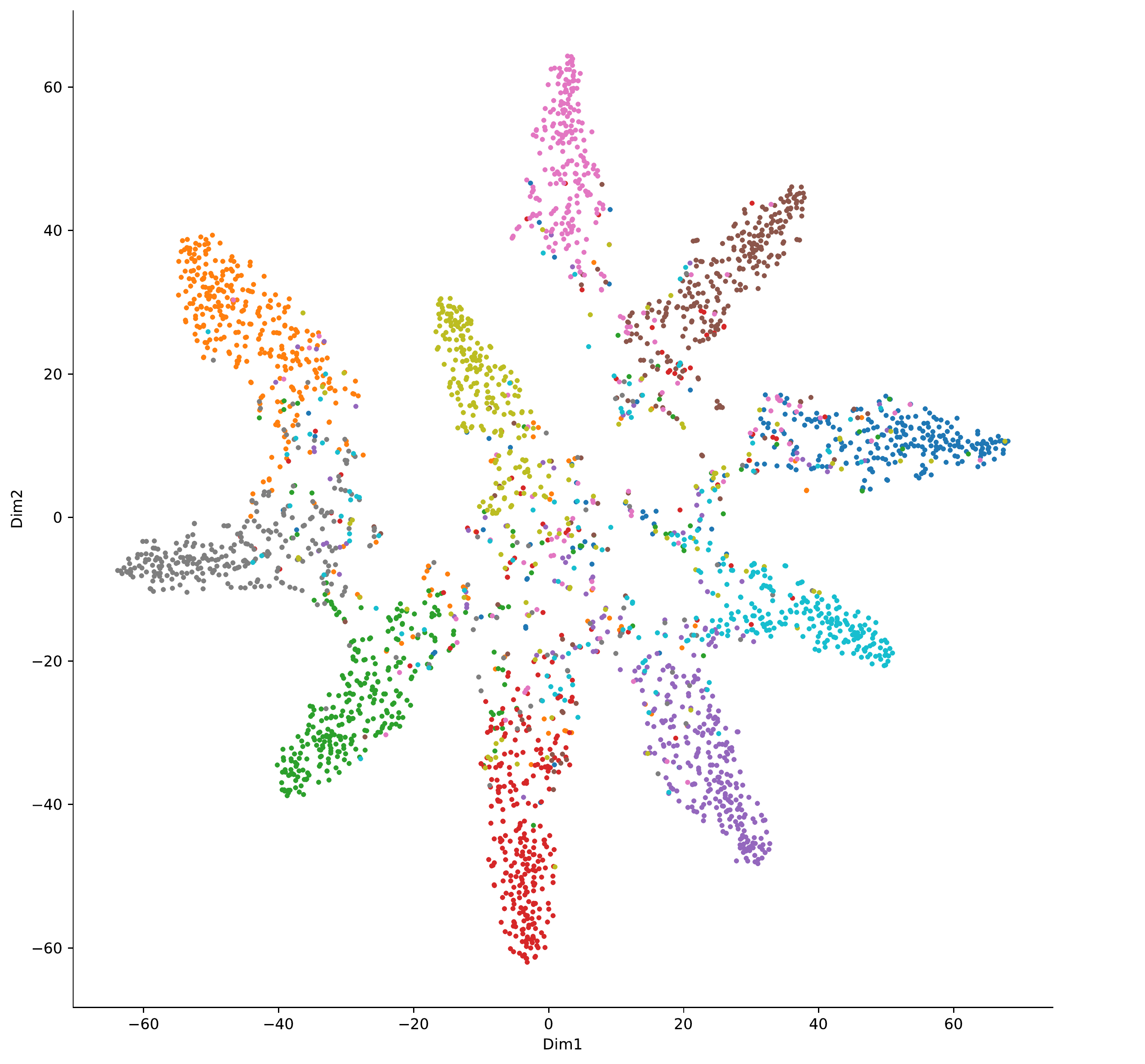}
	}
	\subfigure[\method (n=2)]{
	\includegraphics[width=0.18 \textwidth]{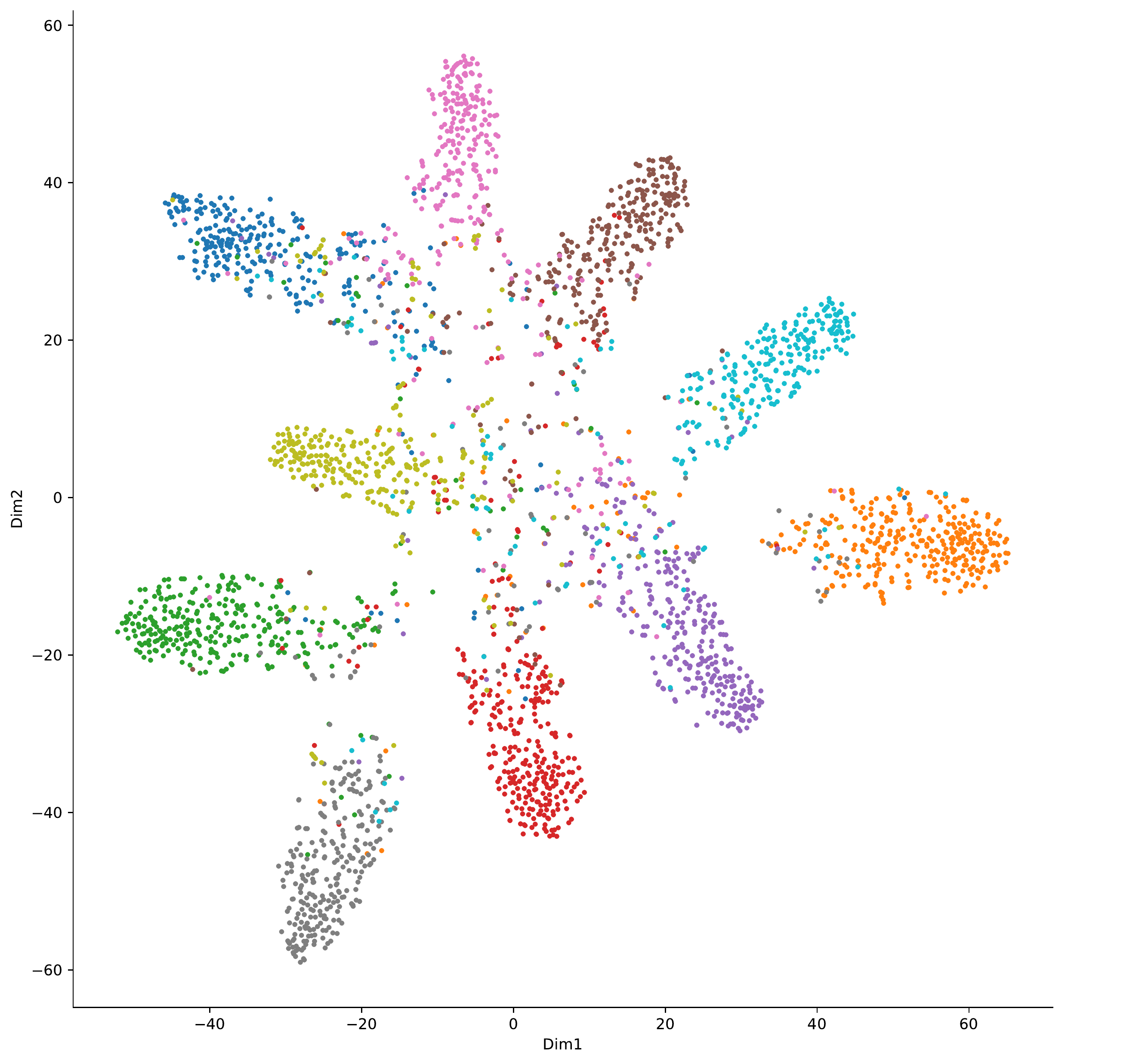}
	}
	\subfigure[\method (n=3)]{
	\includegraphics[width=0.18 \textwidth]{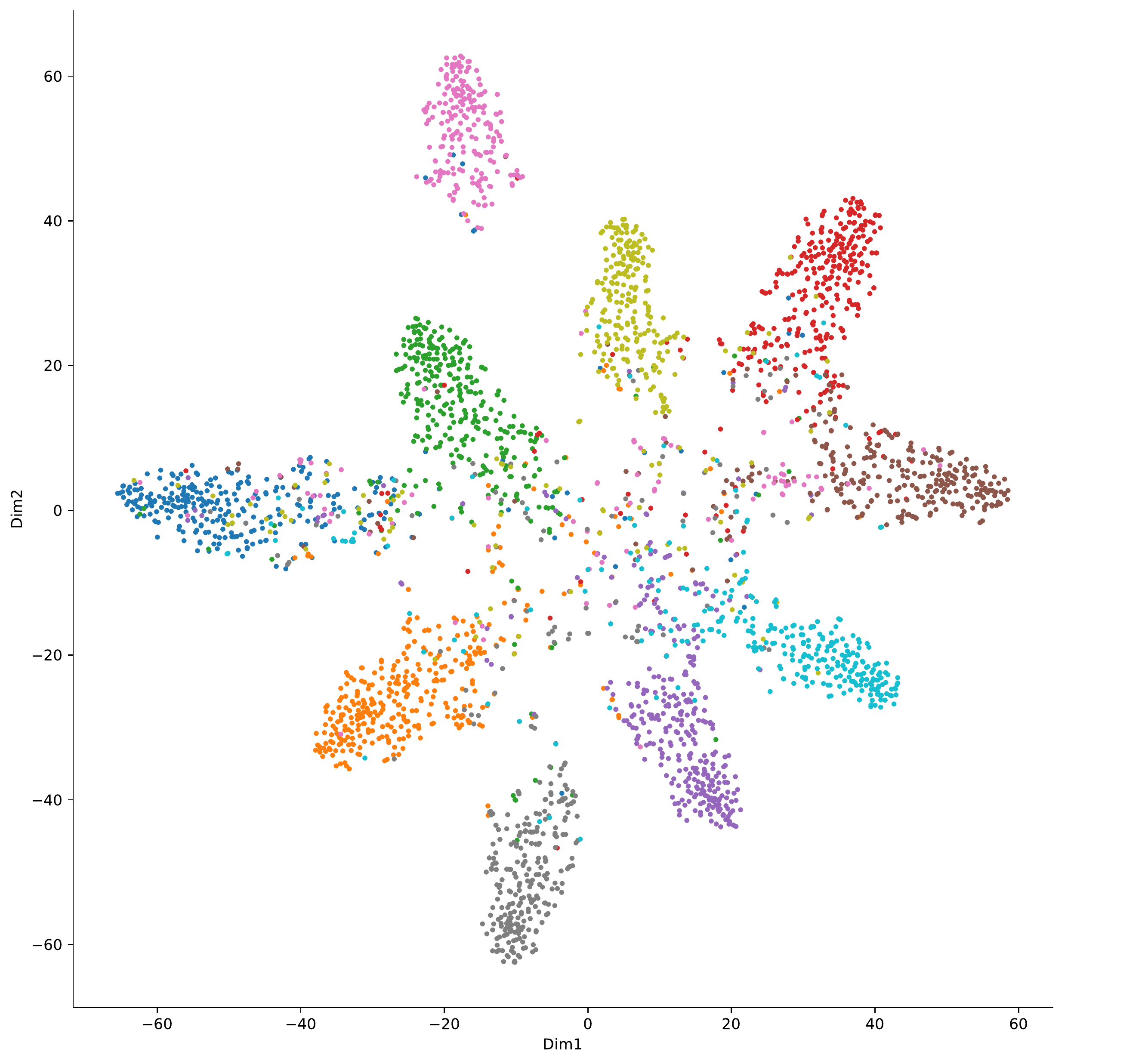}
	}
	\label{fig:qualitative_analysis}
\end{figure*}
We investigate how our approaches transform the features extracted by $f_e$.
We embed the extracted features on 2D space using t-SNE (\cite{MaatenH08}) as shown in Figure \ref{fig:qualitative_analysis}.
We visualize 3200 target features and the label of each data is denoted by a distinct color.
The features extracted without any adaptation and the features of MDAP-L are mingled regardless of the labels.
The features of MDAP and \method with $n$=2,3, on the other hand, are much less ambiguous.
The difference shows the efficacy of the label-wise moment matching in our methods; it separates the features with different labels and agglomerates the features with the same labels.
The cluster assumption states that low-density separation between classes improves generalization performance, \ie, the features with different labels should be thoroughly separated for better classification accuracy (\cite{ChapelleZ05,Lee13}).
Therefore, \method's feature extractors, which draw the features with the same labels together while separating the different labels, are crucial elements in its superior performance.

\end{document}